\newcommand{\NM}{MixER\xspace}
\newcommand{\I}{\text{MI\xspace}}
\newcommand{\Bl}[1]{\color{blue}{#1}}
\newcommand{\Gr}[1]{\color{green}{#1}}
\newtheorem{thm}{Theorem}
\newtheorem{defn}{Definition}
\newtheorem{prop}{Proposition}
\definecolor{cvprblue}{rgb}{0.21,0.49,0.74}
\newcommand{\cmark}{\ding{51}}%
\title{From Cross-Modal to Mixed-Modal Visible-Infrared Re-Identification}
\author{\textsuperscript{1}Mahdi Alehdaghi,\textsuperscript{1} Rajarshi Bhattacharya, \textsuperscript{2}Pourya Shamsolmoali, \textsuperscript{1}Rafael M. O. Cruz, and \textsuperscript{1}Eric Granger\\
\textsuperscript{1}LIVIA, ILLS, Dept. of Systems Engineering, ETS Montreal, Canada\\
\textsuperscript{2}Dept. of Computer Science, University of York, UK \\
{\tt\small \{mahdi.alehdaghi, rajarshi.bhattacharya\}.1@ens.etsmtl.ca,} 
{\tt\small pshams55@gmail.com,} \\
{\tt\small \{rafael.menelau-cruz, eric.granger\}@etsmtl.ca}
}
\begin{document}
\maketitle
\begin{abstract}
Visible-infrared person re-identification (VI-ReID) aims to match individuals across different camera modalities, a critical task in modern surveillance systems. While current VI-ReID methods focus on cross-modality matching, real-world applications often involve mixed galleries containing both V and I images, where state-of-the-art methods show performance limitations due to large domain shifts and low discrimination across mixed modalities. This is because gallery images from the same modality may have lower domain gaps but correspond to different identities.  This paper introduces a mixed-modal ReID setting, where galleries contain data from both modalities. To address the domain shift among inter-modal and low discrimination capacity in intra-modal matching, we propose the Mixed Modality-Erased and -Related (\NM) method. The \NM learning approach disentangles modality-specific and modality-shared identity information through orthogonal decomposition, modality-confusion, and id-modality-related objectives. MixER enhances feature robustness across modalities, improving cross-modal and mixed-modal settings performance. Our extensive experiments\footnote{Our code is provided in suppl. materials and will be made public.} on the SYSU-MM01, RegDB and LLMC datasets indicate that our approach can provide state-of-the-art results using a single backbone, and showcase the flexibility of our approach in mixed gallery applications.  

\end{abstract} 
\vspace{-0.5cm}
\section{Introduction}
\label{sec:intro}

VI-ReID is crucial in surveillance that allows matching individuals across multiple camera views and lighting conditions. Current VI-ReID methods typically focus on cross-modality matching, where a V or I  image serves as the query against a gallery of images from the other modality (top of Fig.\ref{fig:mix-vs-cross}), like comparing day-time V images to night-time I images. While effective for cross-modality matching, real-world surveillance captures both modalities around the clock, leading to a gallery with mixed V and I images (bottom of Fig.\ref{fig:mix-vs-cross}). This mixed gallery more accurately reflects real-world conditions but introduces a considerable challenge -- same-modality images in the gallery (e.g., V-to-V) have lower domain gaps than cross-modality images (e.g., I-to-V), yet correspond to different identities. Future VI-ReID methods must address these mixed gallery scenarios, balancing cross- and same-modality matching for reliable person ReID in real-world applications.

\begin{figure*}[!t]
\centering
\begin{subfigure}{.12\textwidth}
  \centering
  \frame{\includegraphics[width=\linewidth, height=4.52cm]{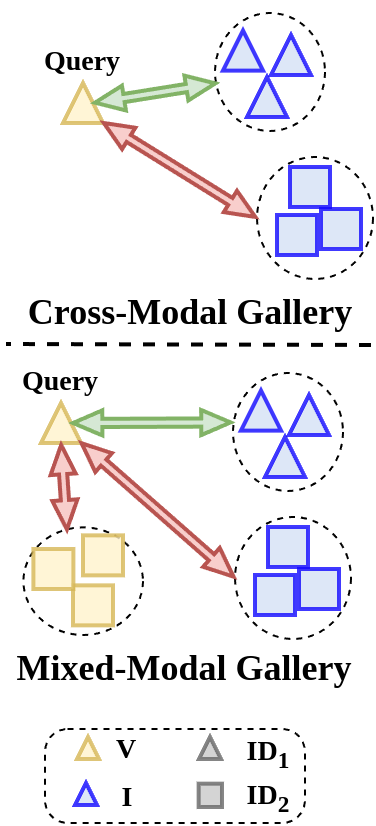}}
  \caption{Cross vs Mixed} 
  \label{fig:mix-vs-cross}
\end{subfigure}%
\hspace{0.05cm}
\begin{subfigure}{.357\textwidth}
  \centering
  \frame{\includegraphics[width=\linewidth]{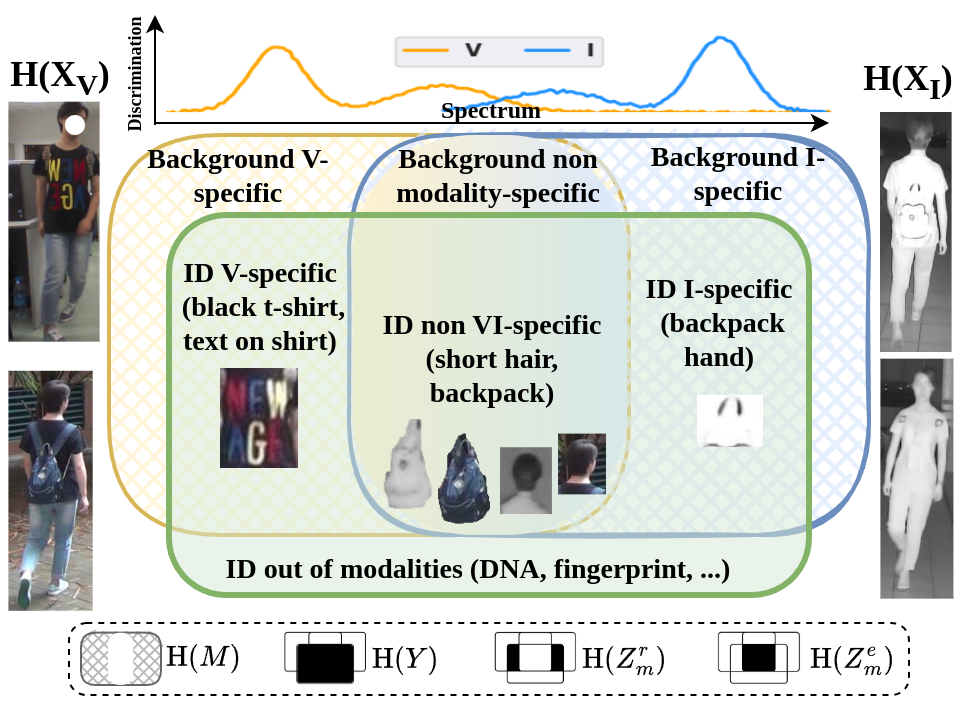}}
  \caption{Venn Diagram}
  \label{fig:venn_intro}
\end{subfigure}%
\hspace{0.05cm}
\begin{subfigure}{.505\textwidth}
  \centering
  \frame{\includegraphics[width=\linewidth]{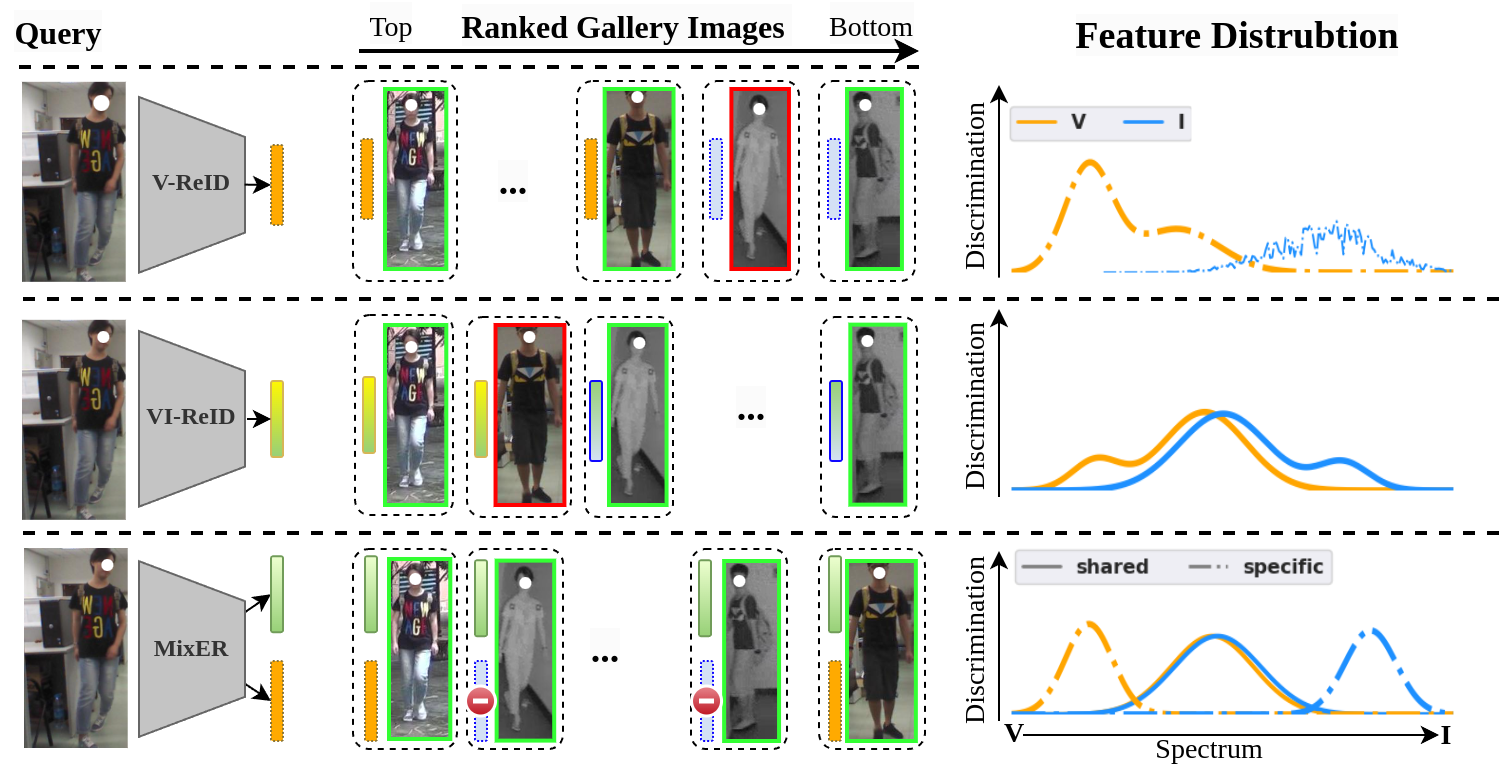}}
  \caption{Mixed-Modal matching}
  \label{fig:mixed-match}
\end{subfigure}%
\vspace{-0.3cm}
\caption{(a) VI-ReID of a query image matched against a cross-modal and mixed-modal gallery. (b) With VI-ReID methods, V, I, and ID information reveal modality-specific and modality-shared ID features. (c) Mixed-modal approaches leverage these features for matching, while uni-modal methods are limited to intra-modality matching due to a large modality gap, and cross-modal methods enable inter-modal matching by extracting shared features. Our MixER method disentangles these features to reduce the gap in shared features and enhance the discrimination in modality-specific features.}

\label{fig:mixed-gallery}
\vspace{-0.6cm}
\end{figure*}



Mixed-modal ReID settings can be interpreted in various ways. We identify four distinct interpretations w.r.t. the identity and camera linked to query and gallery images. In each setting, specific images are removed to measure the robustness of the matching model. 
VI-ReID methods are generally expected to perform well in mixed-modality settings. 
However, in this paper, two settings are shown to present particular challenges where state-of-the-art uni-modal and cross-modal ReID methods struggle to achieve a high level of performance.  A cost-effective VI-ReID method in these settings should rely on a single backbone model to extract discriminative information, and then utilize the appropriate features based on intra-modality or inter-modality matching. Fig.\ref{fig:venn_intro} illustrates the information of the two modalities w.r.t. identity space. The discriminative attributes can be divided into modality-specific and modality-shared information. V-ReID models focus on extracting modality-specific features, while V-ReID methods attempt to extract modality-shared features.

Given the substantial discrepancy between I and V modality images, uni-modal models typically struggle to remain discriminative across different modalities (as illustrated in the first row of Fig.\ref{fig:mixed-match}). 
To address this challenge, VI-ReID methods often focus on minimizing this discrepancy between their extracted feature representations. For example, GAN methods \cite{Wang_2019_ICCV_AlignGAN,kniaz2018thermalgan} are used to translate the modalities and shared-backbones~\cite{all-survey,cui2024dma,park2021learning, randLUPI, IDKL, shape-Erase23, LLCM} are used to map images into a modality-invariant feature space, creating a shared representation that minimizes modality discrepancies. Despite the significant improvements in recent years, the features extracted by these methods often contain some modality-specific information, which weakens the effectiveness of inter-modal matching and limits the minimization of modality discrepancies in the shared feature space. 
A limitation of current VI-ReID methods in mixed matching tasks is their lack of attention to identity features that exist only in one modality, as shown in the second row of Fig.\ref{fig:mixed-match}. Some attributes, such as shirt patterns that are only relevant in the V modality or material textures unique to I, are modality-specific and, therefore, unavailable in the other modality (see Fig.\ref{fig:venn_intro}). For accurate cross-modal matching, this modality-specific information should be erased from identity representations, only allowing access to modality-shared information across both V and I. However, effectively separating modality-specific from modality-invariant is challenging, as it requires effective unsupervised disentanglement to identify them.

To address the limitations of VI-ReID in mixed-modal scenarios, this paper introduces a \textbf{Mix}ed Modality-\textbf{E}rased and Modality-\textbf{R}elated (MixER) ID-discriminative feature learning approach.
By isolating discriminative modality-specific features within one subspace, our approach promotes modality-erased features to capture robust discriminative semantic concepts across modality variations within the other subspace. Implicit Discriminative Knowledge Learning (IDKL) \cite{IDKL} uses modality-specific features to enhance modality-shared ones through knowledge distillation and implicit similarity. In contrast, our approach enforces an orthogonal complement structure. This constraint assures that modality-related features remain independent of shared features. This allows discovering a diverse embedding space to ensure both modality-specific and -shared are effectively used. Building on these assumptions and inspired by \cite{shape-Erase23}, we formulate our objectives from a mutual information perspective, demonstrating that joint learning of modality-erased and modality-related features optimizes mutual information with identity, producing effective feature representations for mixed-modal matching. 

As illustrated in Fig.\ref{fig:mixed-match}, \NM leverages modality-erased features for inter-modality matching while refining intra-modality matching by mixing them with modality-related features via ID-modality-aware, modality-confusion, and feature-fusion losses. Furthermore, our framework can be integrated into state-of-the-art approaches to enhance their performance in mixed-modal and cross-modal matching settings.  

The main contributions of this paper are summarized as follows.
    \noindent \textbf{(1)} We motivate and formalize a new evaluation Mixed-Modal ReID setting, where galleries may have data from both I and V modalities. To the best of our knowledge, we are the first to propose and benchmark such a setting for evaluating VI-ReID methods. 
    \noindent \textbf{(2)} A mixed modality-erased and -related (\NM) feature learning paradigm is introduced for enhancing mixed- and cross-modal VI-ReID on a single feature embedding backbone, separating modality-erased features from modality-related ones through orthogonal decomposition and gradient reversal. Modality-erased features capture modality-shared discriminative semantics, while modality-related features are designed to extract additional discriminative attributes unique to each modality, thereby improving the diversity of learned representations. 
    \noindent \textbf{(3)} Our extensive experiments on the challenging SYSU-MM01, RegDB, and LLCM datasets indicate that our MixER method can outperform state-of-the-art VI-ReID approaches in cross-modal and mixed-modal settings. They also show the flexible integration of MixER to enhance state-of-the-art models. 
\section{Related Work}
\vspace{-0.2cm}
\label{sec:related}

\noindent \textbf{(a) Cross-modal Person Re-Identification.}
Person ReID is the task of identifying distinctive characteristics in sample query images of individuals and matching these characteristics within a larger gallery of images~\cite{chen2017person, chen2018improving, ye2021deep, zheng2016person, zheng2020dual, zhou2023adaptive}.
To address the person ReID task in low-light conditions, V-I ReID has emerged as an important area of research.
Cross-modal VI ReID techniques primarily focus on the extraction of global representations \cite{Wang_2019_ICCV_AlignGAN, kniaz2018thermalgan, randLUPI, all-survey, Bi-Di_Center-Constrained, hetero-center, park2021learning} or local part-based representations through techniques like horizontal striping~\cite{DDAG,cmSSFT} or attention mechanisms \cite{part1,wu2021Nuances,alehdaghi2024bidirectional}. While these representations sufficiently discriminate between identities, they do not adequately address modality-specific discrepancies, leading to less robust modality-free features. To address these issues, recent methods focus on extracting modality-invariant features by disentangling them from modality-specific information \cite{HI-CMD, paired-images2, cmSSFT, zhu2023information, zheng2019camera, lu2024disentangling, shape-Erase23}. There exist some generative techniques \cite{HI-CMD, paired-images2, cmSSFT, alehdaghi2023adaptive} that aim to separate content information from modality-related style attributes. However, generating synthetic images can result in the loss of identity-related information. Some approaches, such as \cite{zhu2023information, zheng2019camera}, reconstruct features while enforcing orthogonality between modality-specific and modality-invariant information, but challenges remain in ensuring erased features do not retain modality-specific elements. Shape information is leveraged in \cite{lu2024disentangling,shape-Erase23} to disentangle body-shape information from modality-aware identity information to make the extracted feature not dependent on shape. We derive some of our insights from their interesting work.

\noindent \textbf{(b) Multi-modal Learning.}
Different data modalities offer complementary features for representation, an area explored in multi-modal learning \cite{liang2022foundations}. Recent transformer-based methods \cite{gabeur2020multi, kim2021vilt, li2021align} fuse multi-modal inputs directly as tokens rather than extracting separate modality-specific features. Real-world applications, however, often lack complete modality availability, presenting challenges addressed by approaches that handle missing modalities \cite{wang2023multi, pan2021disease, cai2018deep, lee2023multimodal}. The ImageBind model \cite{girdhar2023imagebind} exemplifies large-scale multi-modal unification, mapping diverse modalities into a shared representation space. In ReID, \cite{li2024all} presents a unified model across RGB, infrared, sketch, and text modalities, achieving strong cross-domain performance. While related to VI-ReID, this multi-modal approach does not fully address the unique challenges of visible-infrared matching. Instruct-ReID \cite{he2024instruct} further unifies ReID tasks by leveraging a multi-modal backbone for text and image prompts, achieving state-of-the-art performance across various ReID settings.

\noindent \textbf{(c) Critical Analysis.} 
While significant focus has been placed on cross-modal V-I person ReID and its associated challenges, the mixed-modality query-gallery scenario remains largely unaddressed. Despite achieving SOTA results in standard settings, existing VI-ReID methods did not report their performance in mixed-modality settings. Furthermore, while some methods \cite{IDKL, zhu2023information} attempt to disentangle modality-specific features to enhance the modality-shared component in cross-modal contexts, we show that directly using modality-specific information does not aid inter-modal matching, where such information is unavailable and must instead be erased from shared features. However, these modality-specific features prove beneficial for intra-modal matching within mixed-modality settings. Our findings are further supported by mutual information analysis and experimental results across proposed mixed-modal scenarios.



\section{The Proposed \NM Method}
\label{sec:proposed}
To address the limitation of ReID methods in mixed-modal settings, we propose the MixER learning paradigm to combine ID-discriminative modality-erased and modality-related features. This enhances the discriminative capacity of features by utilizing modality-specific information. Fig.\ref{fig:method} provides an overview of the proposed \NM learning approach. It relies on a shared backbone with three sub-modules to extract independent modality-related and -erased features by applying the orthogonal decomposition, modality-confusion, and losses related to ID modality.  

A multimodal dataset for VI-ReID is composed of visible $\mathcal{V}=\{x^{(j)}_v, y_v^{(j)}\}_{j=1}^{N_v}$ and infrared $\mathcal{I}=\{x^{(j)}_i,y_i^{(j)}\}_{j=1}^{N_i}$ sets of images from $C_y$ distinct individuals, with their ID labels. Our proposed Mixed VI-ReID system seeks to match images captured from V and I cameras by using one deep backbone model that encodes modality-invariant person embeddings, denoted by $\mathbf{z}_v$ and $\mathbf{z}_i$. Given query images (V or I), the objective is to retrieve images with the same identity over the gallery set containing both V and I modalities, by computing and sorting the distance value $D(.,.)$ for each gallery image: 
\vspace{-0.2cm}
\begin{equation}
    \label{eq:problem}
    D(\mathbf{z}_m^{(j)}, \mathbf{z}_{m'}^{(p)}) < D(\mathbf{z}_m^{(j)}, \mathbf{z}_{m''}^{(n)}), 
    \vspace{-0.2cm}
\end{equation}
where $y_m^{(j)}=y_{m'}^{(p)} \neq y_{m''}^{(n)}$ , $m,m',m''$ are modalities that could be $v$ or $i$ independently, and superscripts $p$ and $n$ indicating indices of positive and negative samples, respectively. To learn these features, we decompose them into two independent components, each meeting specific constraints suitable for inter-modal and intra-modal matching. This is achieved by maximizing the mutual information (MI) between these features and the ID labels.


\subsection{Mutual Information Analysis}
To extract ID-discriminative features from images, the model needs to maximize the MI between these extracted features\footnote{\noindent{Uppercase is used as random variables and lowercase for samples.}}, $Z_{m}$, and label spaces, $Y$:
\vspace{-0.25cm}
\begin{equation}
\vspace{-0.25cm}
\label{eq:mi_main}
    \max_{Z_{m}}  \quad \I(Z_{m};Y), 
\end{equation}
where $Y$ represents the identity of the individuals in the input images. To ensure that the learned features are effective in mix-modality scenarios, they are decomposed into two independent components:  \textbf{(a) modality-erased} ($Z^e_{m}$), which should not contain any modality information to be proper for inter-modal matching (e.g., short-hair attributes in Fig.\ref{fig:venn_intro}), and \textbf{(b) modality-related} ($Z^r_{m}$) component to refine the modality-erased for improving the intra-modal matching. This component should contain ID information that is also relevant to the modality (e.g., text on a t-shirt in Fig.\ref{fig:venn_intro}). To have two independent components, the MI between them must be zero. Thus, the optimization becomes:
\begin{equation}
\label{eq:mi_dual}
\begin{aligned}
     \max_{Z^e_{m},Z^r_{m}}\quad \{ \I(Z^e_{m},Z^r_{m};Y) \}\; 
\textrm{s.t.}& \;\; \I(Z^e_{m}; Z^r_{m})=0, \\
\; \I(Z^e_{m}; M)=0 \;\; \textrm{and}& \;\; \I(Z^r_{m}; Y|M)=0,
\end{aligned}
\end{equation}
where $M$ is the modality label space. Constraint $\I(Z^e_{m}; M)$=0 ensures that modality information is erased from $Z^e_{m}$ and $\I(Z^r_{m};Y|M)$=0 ensures that $Z^r_{m}$ does not contain ID-aware information, which disregards the modality label. Also, $Z^e_{m}$ and $ Z^r_{m}$ should be independent.
\begin{thm} \label{thm:1m} If $Z^e_{m}$ and $Z^r_{m}$ are independent, then $\I(Z^e_{m},Z^r_{m};Y)=\I(Z^e_{m};Y) + \I(Z^r_{m};Y)$.
\end{thm}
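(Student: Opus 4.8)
The plan is to reduce the joint mutual information to single-variable terms through the entropy form of mutual information, isolating exactly the term that the independence hypothesis is meant to control. I would start from
\[
\I(Z^e_{m},Z^r_{m};Y) = H(Z^e_{m},Z^r_{m}) - H(Z^e_{m},Z^r_{m}\mid Y),
\]
and then expand each of the two joint entropies separately.

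First I would use the marginal independence hypothesis $\I(Z^e_{m};Z^r_{m})=0$, equivalently $H(Z^e_{m},Z^r_{m}) = H(Z^e_{m}) + H(Z^r_{m})$, to split the unconditioned joint entropy. Second, for the conditioned joint entropy I would apply the chain rule $H(Z^e_{m},Z^r_{m}\mid Y) = H(Z^e_{m}\mid Y) + H(Z^r_{m}\mid Y, Z^e_{m})$ and rewrite the last summand as $H(Z^r_{m}\mid Y) - \I(Z^e_{m};Z^r_{m}\mid Y)$. Collecting the pieces and recognizing $\I(Z^e_{m};Y)=H(Z^e_{m})-H(Z^e_{m}\mid Y)$ and $\I(Z^r_{m};Y)=H(Z^r_{m})-H(Z^r_{m}\mid Y)$ yields
\[
\I(Z^e_{m},Z^r_{m};Y) = \I(Z^e_{m};Y) + \I(Z^r_{m};Y) + \I(Z^e_{m};Z^r_{m}\mid Y),
\]
so the claimed additive decomposition is \emph{equivalent} to the vanishing of the residual conditional mutual information $\I(Z^e_{m};Z^r_{m}\mid Y)$.

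The main obstacle, and the real content of the statement, is precisely this residual term. Marginal independence alone does not force it to zero: conditioning on the shared target $Y$ can couple two a priori independent features (the ``explaining away'' effect), as the XOR construction ($Z^e_{m},Z^r_{m}$ independent uniform bits with $Y=Z^e_{m}\oplus Z^r_{m}$) shows, where both single-variable terms vanish yet the joint information equals one bit. I therefore expect the proof to require the stronger hypothesis of conditional independence given identity, $\I(Z^e_{m};Z^r_{m}\mid Y)=0$, rather than merely $\I(Z^e_{m};Z^r_{m})=0$. Accordingly I would either (i) read the theorem's ``independent'' as conditional independence given $Y$, which is the natural reading under the orthogonal-decomposition design where $Z^e_{m}$ and $Z^r_{m}$ occupy complementary subspaces carrying disjoint (modality-shared versus modality-specific) identity cues, or (ii) derive $\I(Z^e_{m};Z^r_{m}\mid Y)=0$ directly from the generative assumptions governing how the two components are produced. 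Once that term is argued to vanish, the remaining algebra is immediate from the two entropy expansions above.
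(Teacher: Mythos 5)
Your decomposition is, up to bookkeeping, the same one the paper uses: the supplementary proof invokes the inclusion--exclusion identity $\I(Z^r_m,Z^e_m;Y)=\I(Z^r_m;Y)+\I(Z^e_m;Y)-\I(Z^r_m;Z^e_m;Y)$ and then bounds the interaction term via monotonicity, $\I(Z^r_m;Z^e_m;Y)\leq \I(Z^r_m;Z^e_m)=0$; since $\I(Z^e_m;Z^r_m;Y)=\I(Z^e_m;Z^r_m)-\I(Z^e_m;Z^r_m\mid Y)=-\I(Z^e_m;Z^r_m\mid Y)$ under marginal independence, this is exactly the identity you derived through entropies. The divergence is in the final step, and there your analysis is the more careful one. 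The paper passes from the upper bound $\I(Z^r_m;Z^e_m;Y)\leq 0$ to the equality $\I(Z^r_m;Z^e_m;Y)=0$ without justification; that step implicitly assumes the interaction information is nonnegative, which is false in general because interaction information can be strictly negative (synergy). Your XOR construction is precisely the standard counterexample: for independent uniform bits with $Y=Z^e_m\oplus Z^r_m$ one has $\I(Z^e_m;Y)=\I(Z^r_m;Y)=0$ but $\I(Z^e_m,Z^r_m;Y)=1$ and $\I(Z^e_m;Z^r_m;Y)=-1$. So the theorem does not follow from marginal independence alone, and the missing hypothesis you identify --- conditional independence given $Y$, i.e.\ $\I(Z^e_m;Z^r_m\mid Y)=0$ --- is exactly what is needed to close the argument. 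What marginal independence does yield is the one-sided inequality $\I(Z^e_m,Z^r_m;Y)\geq \I(Z^e_m;Y)+\I(Z^r_m;Y)$, so the sum of the two per-branch objectives is still a valid lower bound on the joint objective the paper wants to maximize; but that is weaker than the stated equality. In short, your proposal is correct as far as it goes, and the residual term you refuse to discard is a genuine gap in the paper's own proof rather than in yours.
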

\begin{proof}
Can be found in the supplementary materials.  
\end{proof}
\noindent By incorporating $M$ into Eq. (\ref{eq:mi_dual}) and using Theorem \ref{thm:1m}: 
\begin{equation}
\begin{aligned}
    \I(Z^e_{m},Z^r_{m};Y)\!=& \I(Z^e_{m};Y) + \I(Z^r_{m};Y) \\
    =& \I(Z^e_{m};Y|M) + \I(Z^e_{m};Y;M) \\
    +& \I(Z^r_{m};Y|M) + \I(Z^r_{m};Y;M).
\end{aligned}
\end{equation}
Since the \I(.;.) is non-negative, $\I(Z^e_{m};Y;M)$=0, and $\I(Z^r_{m};Y|M)$=0, 
Eq. (\ref{eq:mi_dual}) can be formulated as the maximization of the following Lagrangian:
\begin{equation}
\begin{aligned}
    \max_{Z^e_{m},Z^r_{m}} \{\!& \overbrace{ \I(Z^e_{m};Y|M)\!- \!\lambda_1\I(Z^e_{m}; M)}^{\textit{Modality-Erased Learning} } + \\
    \!&\underbrace{\I(Z^r_{m};Y;M)}_{\textit{Modality-Related Learning}}  - \underbrace{\lambda_2 \I(Z^e_{m}; Z^r_{m})}_{\text{Orthogonal Feature Learning}}\}.
\end{aligned}
\end{equation}

\begin{figure}[t]
\centering
\includegraphics[width=0.99\columnwidth]{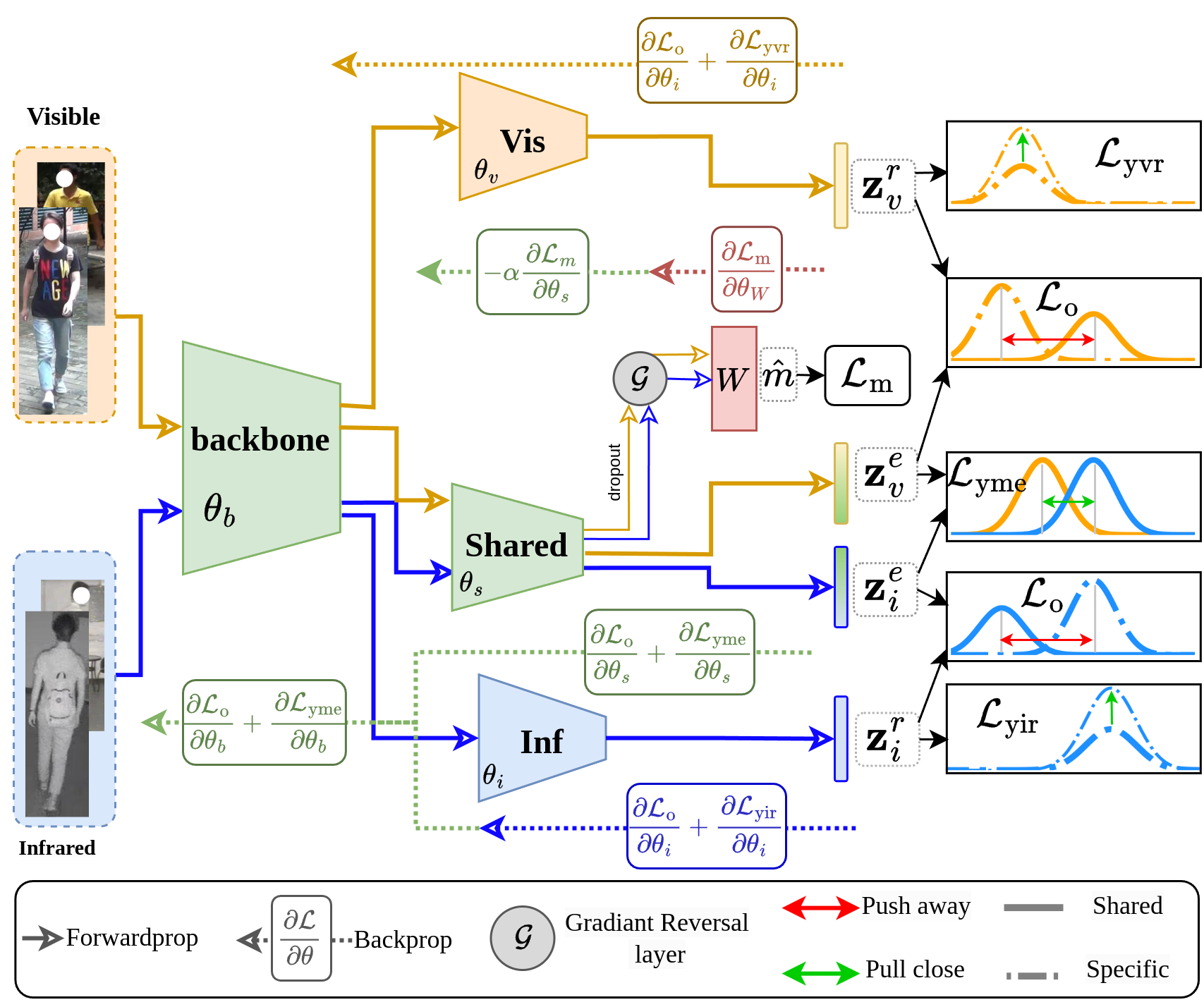} 
\vspace{-0.3cm}
\caption{The overall architecture of our proposed MixER method. It extracts two independent ID-discriminative feature vectors by orthogonal decomposition, modality-confusion, and modality-aware losses for learning modality-erased and modality-related feature representation. }
\label{fig:method}
\vspace{-0.3cm}
\end{figure}

\subsection{Modality Erased and Related Learning}

\noindent {\textbf{(a) Orthogonal Feature Decomposition.}}
To decompose the extracted features into modality-erased and -related feature vectors,  
two modality-specific and one shared sub-modules are proposed to project the $\mathbf{z}_m$ to them as:
\begin{equation}
    \mathbf{z}^e_m = \mathcal{F}_s(\mathbf{z}_m), \ \ \mathbf{z}^r_m =\mathcal{F}_m(\mathbf{z}_m),
\end{equation}
where $\mathbf{z}_m=\mathcal{F}_b(x_m)$ is extracted by a shared backbone.
One reason behind using specific sub-modules for modality-related features is to prevent them from sharing information through model parameters. 

\noindent {\textit{Minimizing} $\I(Z^e_{m}; Z^r_{m})$}: When minimizing the MI between $Z^r_m$ and $Z^e_m$ they must be independent to avoid affecting each other through the learning. Since the MI estimation is complex and time-consuming~\cite{MINE,shape-Erase23}, we estimate the independence constraint as minimizing the orthogonal loss:
\begin{equation}
    \mathcal{L}_\text{o} =  \mathbb{E}_{(\mathbf{z}^r_m,\mathbf{z}^e_m)\sim (Z^r_m,Z^e_m)} \dfrac {\mathbf{z}^r_m \cdot \mathbf{z}^e_m} {\left\| \mathbf{z}^r_m\right\| _{2}\left\| \mathbf{z}^e_m\right\| _{2}} 
\end{equation}

\noindent {\textbf{(b) Learning Modality Erased Features.}}
The goal of modality-erased learning is extracting features, $\mathbf{z}^r_m$, that discriminate the ID without using modality information to make them appropriate for cross-modal matching when the modality-specific information is absent.  

\noindent {\textit{Maximizing} $\I(Z^e_{m};Y|M)$}: given the MI attributes, we expand it as (The proof can be found in Sec. \ref{sec:proofs} of suppl. materials.): 
\begin{equation}
\I(Z^e_{m};\!Y\!|\!M\!)\!=\!\I(Z^e_{m}\!;Y\!)-\I(Z^e_{m};Y;M\!)\!=\!\I(\!Z^e_{m};Y\!).
\label{eq:ym}
\end{equation}
To maximize Eq. (\ref{eq:ym}), the classification loss $\mathcal{L}_\text{id}$ must be minimized (see the property \ref{prop:cross} in supply. materials):
\begin{equation}
    \mathcal{L}_\text{meid}(Z_m^e,Y) = \mathbb{E}_{(\mathbf{z}^e_m,y_m)\sim (Z^e_m,Y)} \mathcal{L}_\text{ce}(\mathbf{z}^e_m, y_m),
\end{equation}
where $\mathcal{L}_\text{ce}$ is cross-entropy loss. To obtain distinct features for each person, the center-cluster loss \cite{wu2021Nuances} and $\mathcal{L}_\text{cc}$ are minimized. Our modality-erased identity loss is:
\begin{equation}
    \mathcal{L}_\text{yme} = \mathcal{L}_\text{meid}(Z_m^e,Y) + \mathcal{L}_\text{cc}(Z_m^e,Y),
\end{equation}
\noindent {\textit{Minimizing} $\I(Z^e_{m}; M)$}: Modality-erased features should not distinguish the origin modality of input samples. Therefore, an adversarial modality classifier based on the Gradient Reversed Layer (GRL)\cite{GRL} is used to propagate the reverse gradient onto model parameters. The modality-confusion loss for this objective is:
\vspace{-0.2cm}
\begin{equation}
    \mathcal{L}_\text{m} = \mathbb{E}_{(\mathbf{z}^e_m,m)\sim (Z^e_m,M)} \mathcal{L}_\text{ce}(\mathcal{G}(W^T\mathbf{z}^e_m), m),
    \vspace{-0.2cm}
\end{equation}
where $\mathcal{G}$ is the GRL and $W \in \mathbb{R}^{d\times 2}$ is a linear layer.

\noindent {\textbf{(c) Learning Modality Related Features.}}
To learn features that leverage modality-related ID-discriminating information simultaneously, a new doubled label space is proposed to account for identity alongside the modality by separating the same person in each modality: 
\vspace{-0.2cm}
\begin{equation}
y'_{m} \sim Y'  = \begin{cases}
2y_m &m = \text{v}\\
2y_m + 1 &m = \text{i},
\vspace{-0.2cm}
\end{cases} 
\label{eq:disc_label}
\end{equation}  
and minimizing the cross-entropy loss between $\mathbf{z}^r_m$ and $y'$:
\begin{equation}
    \mathcal{L}_\text{mrid}(Z_m^r,Y')= \mathbb{E}_{(\mathbf{z}^r_m,y'_m)\sim (Z^r_m,Y')} \mathcal{L}_\text{ce}(\mathbf{z}^r_m, y'_m).
\end{equation}
The modality-aware loss function for $Z_m^r$ is :
\begin{equation}
    \mathcal{L}_\text{ymr} = \mathcal{L}_\text{mrid}(Z_m^r,Y') + \mathcal{L}_\text{cc}(Z_m^r,Y'),
\end{equation}
where the $\mathcal{L}_\text{cc}$ is the center-cluster loss \cite{wu2021Nuances}.

\noindent {\textbf{(d) Mixed-Modal Matching Fusion.}}
To balance modality-erased and modality-related features for the matching process at the inference time, we propose a mixed cross-modal triplet loss to avoid having one component be dominant or useless. The feature fusion loss is:
\begin{equation}
\begin{aligned}
    \mathcal{L}_\text{f}\! &=\!\max\! \{ D(\mathbf{z}^{f,(j)}_m,\! \mathbf{z}^{f,(p)}_m)\! -\! D(\mathbf{z}^{e,(j)}_m, \mathbf{z}^{e,(n)}_{\Tilde{m}})\! +\! \alpha,0 \} \\
    &+\! \max \{ D(\mathbf{z}^{e,(j)}_m, \mathbf{z}^{e,(p)}_{\Tilde{m}})\! -\! D(\mathbf{z}^{f,(j)}_m,\! \mathbf{z}^{f,(n)}_m)\!  +\! \alpha,0 \}
\end{aligned}
\end{equation}
where $D(.,.)$ is the distance between two embeddings, $\Tilde{m} \neq m$, $n$ and $p$ are positive and negative samples to the $j$ instance. $\mathbf{z}^f_m$ is formed by concatenating $\mathbf{z}^e_m$ and $\mathbf{z}^r_m$.

\noindent {\textbf{(e) Overall Training.}}
We jointly optimize the network in an end-to-end manner by using the overall loss:
\vspace{-0.2cm}
\begin{equation}
\label{eq:all_losses}
    \mathcal{L} = \mathcal{L}_{\text{yme}} + \mathcal{L}_{\text{ymr}}+ \lambda_{\text{m}} \mathcal{L}_{\text{m}} + \lambda_{\text{o}} \mathcal{L}_{\text{o}} + \lambda_{\text{f}} \mathcal{L}_{\text{f}},
    \vspace{-0.1cm}
\end{equation}
and $\lambda_{\text{m}}$, $\lambda_{\text{o}}$ and $\lambda_{\text{f}}$ are hyperparameters for weighting losses. 

\subsection{Inference }
During inference, \NM performs mixed-modal matching using a single shared backbone and three light heads rather than using three different backbones. For a given input image $x_m$, it extracts $\mathbf{z}_m^r$, $\mathbf{z}_m^e$ and combines them into $\mathbf{z}_m^f$. Matching scores for a query input image are computed using cosine similarity. Features fused by simple concatenation are used for images within the same modality, while cross-modal matching uses only modality-erased features.
\section{Experimental Methodology and Results}


\begin{figure*}[!t]
    \centering
    \includegraphics[width=\linewidth]{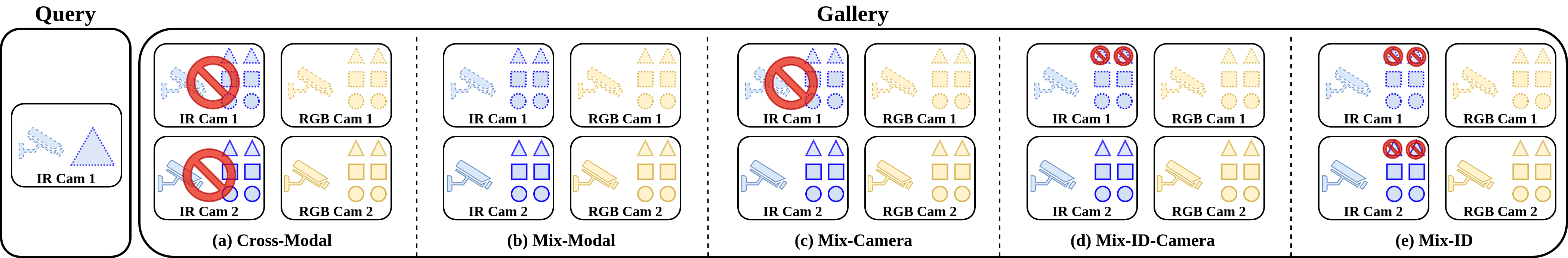}
    \vspace{-0.6cm}
    \caption{Different settings for forming a gallery based on modality, camera, and query identity. The gallery set images are (a) all images from another modality, (b) all images from both modalities, and all images except ones with the (c) same camera, (d) same camera and same identity, and (e) same identity in the same modality as the query. }
    \label{fig:mixed-gallery}
    \vspace{-0.5cm}
\end{figure*}

\noindent\textbf{Datasets.} 
Research on cross-modal V-I ReID has extensively used the SYSU-MM01 \cite{SYSU}, RegDB \cite{regDB} datasets, and recently published LLCM \cite{LLCM} datasets. SYSU-MM01 is a large dataset containing more than 22K V and 11K I images of 491 individuals captured with 4 RGB and 2 NIR cameras. 
RegDB contains 4,120 co-located V-I images of 412 individuals. Randomly divide the dataset into two sets of the same size for training and testing. The LLCM dataset consists of a large, low-light, cross-modality dataset comprising 1064 identities and is divided into training and testing sets at a 2:1 ratio.

\begin{table*}[!phb]
\centering
\resizebox{0.87\textwidth}{!}{%
\begin{tblr}{
  colspec={|l|c|cc|cc|cc|cc||cc|cc||cc|cc|},
  cell{1}{3} = {c=8}{c},
  cell{1}{11} = {c=4}{c},
  cell{1}{15} = {c=4}{c},
  cell{1}{1} = {r=3}{},
  cell{1}{2} = {r=3}{},
  cell{2}{3} = {c=2}{},
  cell{2}{5} = {c=2}{},
  cell{2}{7} = {c=2}{},
  cell{2}{9} = {c=2}{},
  cell{2}{11} = {c=2}{},
  cell{2}{13} = {c=2}{},
  cell{2}{15} = {c=2}{},
  cell{2}{17} = {c=2}{},
  hline{1-2,4,13} = {-}{},
  hline{3} = {2-18}{},
  colsep=3.5pt,
  stretch=0
}
\textbf{Method} & Venue  & Mixed-Modal    &                &                  &                &                     &                &                 &                & Cross-Modal  &              &                 &              & Uni-Modal                  &              &              &              \\
\textbf{Method} & Venue & \textbf{Mix}   &                & \textbf{Mix-Cam} &                & \textbf{Mix-Cam-ID} &                & \textbf{Mix-ID} &                & \textbf{All} &              & \textbf{Indoor} &              & \textbf{I$\rightarrow$I} &              & \textbf{V$\rightarrow$V} &              \\
      &          & \textbf{R1}    & \textbf{mAP}   & \textbf{R1}      & \textbf{mAP}   & \textbf{R1}         & \textbf{mAP}   & \textbf{R1}     & \textbf{mAP}   & \textbf{R1}  & \textbf{mAP} & \textbf{R1}     & \textbf{mAP} & \textbf{R1}                & \textbf{mAP} & \textbf{R1}  & \textbf{mAP} \\
DDAG\cite{DDAG} & ECCV20 & 91.91 & 57.60 & 80.81 & 53.54 & 70.41 & 45.14 & 23.92 & 29.09 & 53.29 & 50.61 & 61.02 &  67.98 & 80.15 & 86.80 & 97.89 & 91.90\\
MPANet \cite{wu2021Nuances}  & CVPR21 & 95.46 & 72.82 & 89.45 & 70.34 & 81.95 & 63.81 & 49.06 & 51.08 & 70.58 & 68.24 & 76.54 & 80.95 & 88.53 & 92.85 & 98.31 & 94.08\\
DEEN\cite{LLCM} + Flip &  CVPR23 & - & - & - & - & - & - & - & - & 74.7 & 71.8 & 80.3 & 83.3 & - & - & - &- \\
DEEN\cite{LLCM}  &  CVPR23& 94.07 & 74.06 & 87.50 & 72.09 & 79.50 & 65.66 & 56.77 & 55.30 & 72.55 & 68.59 & 84.21  & 82.14 & 86.14 & 90.98 & 96.83 & 89.93\\
SGEIL\cite{shape-Erase23} & CVPR23 & 94.82 & 71.03 & 89.22 & 70.16 & 79.33 & 61.34 & 46.60 & 48.37 & 73.34 & 67.44 & 84.09 & 81.65 & 86.52 & 91.36 & 97.29 & 90.24\\
SAAI\cite{SAAI} +AIM & ICCV23 & - & - & - & - & - & - & - & - & 75.90 & 77.03 & 83.20 & 88.01 & - & - & - & - \\
SAAI\cite{SAAI} & ICCV23 & 96.01 & 74.59 & 90.63 & 72.51 & 84.30 & 65.94 & 52.49 & 53.30 & 73.87 & 69.71 & 84.19 & 82.59 & 89.29 & 93.06 & 98.24 & 93.46\\
IDKL\cite{IDKL} + KR & CVPR24 & - & - & - & - & - & - & - & - & 81.42 & 79.85 & 87.14 & 89.37 &  
- & - & - & -\\
IDKL\cite{IDKL}  &CVPR24 & 95.27 & 72.81 & 90.15 & 71.87 & 80.73 & 63.54 & 47.34 & 50.86 & 72.05 & 69.67 & 84.54 & 83.76 & 89.55 & 93.18 & 98.55 & 95.13\\
\textbf{\NM}(R) & & 96.22 & 38.94 & 88.32 & 25.37 & 82.58 & 23.65 & 0 & 0.99 & 1.19 & 3.61 & 1.94 & 5.2 &90.02 & 93.93 & 98.63 & 94.81\\
\textbf{\NM}(E) & & 96.46 & 75.21 & 90.24 & 72.89 & 84.55 & 66.02 & 52.89 & 54.86 & 75.04 & 71.22 & 85.82 & 84.06 &90.18 & 93.65 & 97.95 & 93.38\\

\textbf{\NM} & Ours & \textbf{96.63} & \textbf{79.64} & \textbf{91.77} & \textbf{76.35} & \textbf{87.56} & \textbf{72.70} & \textbf{65.14} & \textbf{62.76} & 75.04 & 71.22 & 85.82 & 84.06 & 93.06 & 95.82 & 99.14 & 95.30 \\
\textbf{\NM} +AIM  & & - & - & - & - & - & - & - & - & 76.12 & 75.45 & 85.30 & 85.54 & - & - & - & - \\
\textbf{\NM} +KR &  & - & - & - & - & - & - & - & - & \textbf{85.96} & \textbf{83.54} & \textbf{92.48} & \textbf{91.49} & - & - & - & - \\ \hline
\end{tblr}
}
\vspace{-0.3cm}
\caption{Performance of SOTA VI-ReID techniques in mixed, cross, and uni-modal settings on the SYSU-MM01 dataset. AIM (Affinity Inference) and KR (k-reciprocal) are different re-ranking processes that are applied on SAAI\cite{SAAI} and IDKL\cite{IDKL} original GitHub projects.}
\label{tab:SYSU}
\end{table*}

\noindent\textbf{Mixed-Modal Evaluations.}
Since mixed-modal evaluation has not been performed on V-I datasets, we introduce multiple settings of such assessment based on real-world scenarios where the query and gallery images can be either I or V. Unlike cross-modality settings, where the query and gallery sets strictly contain images of opposite modalities, the mixed-modal setting may contain images from both modalities in the query and gallery sets. We structured the breadth of mixed-modality settings into 4 cases as illustrated in Fig.\ref{fig:mixed-gallery}, each one defined by the exclusion of images based on the query identity and camera:

\noindent{1.} \textbf{Mix}: No images are excluded. Images of all individuals and cameras are included, regardless of the query camera type. 

\noindent{2.} \textbf{Mix-Camera}: Excludes images taken by the same camera as the query.

\noindent{3.} \textbf{Mix-Camera-ID}: Excludes images taken by the same camera as the query and belonging to the query person. 

\noindent{4.} \textbf{Mix-ID}: Excludes all images of the query person captured by cameras of the same modality as the query. 

Since ReID methods have not been evaluated in these settings, we retrained several open-sourced state-of-the-art (SOTA) VI-ReID methods to assess their performance across these configurations. Rank-1 (R1) accuracy and mean average precision (mAP) were measured for each setting in line with the dataset evaluation criteria. Note that for the RegDB and LLCM datasets, there is only one camera per modality, therefore the "Mix-Camera" and "Mix-Camera-Identity" settings are not applicable.

\noindent\textbf{Implementation Details.}
To extract modality-erased features, we use the SAAI model\cite{SAAI} without prototype learning as a baseline with ResNet50 \cite{resnet} as the backbone. 
For modality-related learning modules, \texttt{layer4} is cloned from the backbone for each modality. Each image input is resized to 288 by 144, then cropped and erased randomly, and filled with zero padding or mean pixels. We used an ADAM optimizer \cite{kingma2014adam} with a linear warm-up strategy for the optimization process. Each training batch contains 8 V and 8 I images from 10 randomly selected identities.  The model was trained for 180 epochs, following \cite{SAAI}, the initial learning rate is set to 0.0004 and decreased by factors of 0.1 and 0.01 at 80 and 120 epochs, respectively. 

\noindent\textbf{Benchmark methods.} To evaluate the effectiveness according to different settings, 
we benchmarked our proposed \NM method against several open-source SOTA VI-ReID methods: DDAG \cite{DDAG}, MPANet \cite{wu2021Nuances}, DEEN \cite{LLCM}, SGEIL \cite{shape-Erase23}, SAAI \cite{SAAI}, and IDKL \cite{IDKL}. Each method was retrained from scratch using the parameters given by the authors. The implementation details of these methods are described in the suppl. materials.

\subsection{Comparison with State-of-the-Art Methods}

\begin{table*}[!ht]
  \small
  \centering
  \begin{subtable}{.48\textwidth}
  \centering
    
\resizebox{\textwidth}{!}{
\begin{tblr}{
  colspec={|l|cc|cc||cc|cc|},
  cell{1}{1} = {r=3}{},
  cell{1}{2} = {c=4}{c},
  cell{1}{6} = {c=4}{c},
  cell{2}{2} = {c=2}{},
  cell{2}{4} = {c=2}{},
  cell{2}{6} = {c=2}{},
  cell{2}{8} = {c=2}{},
  hline{1,4} = {-}{},
  hline{2-3} = {2-9}{},
  colsep=3pt,
  stretch=0
}
\textbf{Method} & Mixed-Modal &  &  &  & Cross-Modal &  &  & \\
 & \textbf{Mix} &  & \textbf{Mix-ID} &  & \textbf{I$\rightarrow$V} &  & \textbf{\textbf{V$\rightarrow$I}} & \\
 & \textbf{R1} & \textbf{mAP} & \textbf{R1} & \textbf{mAP} & \textbf{R1} & \textbf{mAP} & \textbf{R1} & \textbf{mAP}\\
DDAG \cite{DDAG} &  99.9 & 76.17 & 45.29 & 45.54& 69.34 & 63.46 & 68.06  & 61.80\\
MPANet \cite{wu2021Nuances}  & \textbf{100} & 76.46 & 42.18 & 44.66 &  84.27& 80.20 & 83.20 & 79.82 \\
DEEN \cite{LLCM}&  99.95 & 83.06 & 66.21 & 59.61 &  90.29 &  83.98 & 91.21 & 85.13 \\ 
SAAI  \cite{SAAI}   & \textbf{100} & 82.29 & 58.01 & 55.41 & 86.21 & 80.0 & 86.60 &  81.51 \\ 
IDKL  \cite{IDKL}   & \textbf{100} & 83.83 & 61.46 & 60.54 & 87.23 & 83.20 & 87.91 &  85.07 \\
SAAI +AIM & - & - & - & - & 92.09 & 92.01 & 91.07 & 91.45\\
IDKL + KR & - & - & - & - & 94.22 & 90.43 & 94.72 & 90.19\\
\hline
\textbf{\NM} & \textbf{99.9} & \textbf{89.44} & \textbf{79.95} & \textbf{73.45} & 90.49 & 85.63 & 90.53 & 86.42\\
\textbf{\NM} + AIM & - & - & - & - & 90.78 & 90.18 & 90.35 & 90.02\\
\textbf{\NM} + KR & - & - & - & - & \textbf{97.09} & \textbf{93.01} & \textbf{96.55} & \textbf{93.55}\\

\hline
\end{tblr}
}
    \vspace{-0.3cm}
    \caption{RegDB dataset.}
    \label{tab:regdb}
  \end{subtable}%
  \hspace{0.2cm}
  \begin{subtable}{.48\textwidth}
  \centering
  
\resizebox{\textwidth}{!}{
\begin{tblr}{
  colspec={|l|cc|cc||cc|cc|},
  cell{1}{1} = {r=3}{},
  cell{1}{2} = {c=4}{c},
  cell{1}{6} = {c=4}{c},
  cell{2}{2} = {c=2}{},
  cell{2}{4} = {c=2}{},
  cell{2}{6} = {c=2}{},
  cell{2}{8} = {c=2}{},
  hline{1,4} = {-}{},
  hline{2-3} = {2-9}{},
  colsep=3pt,
  stretch=0
}
\textbf{Method} & Mixed-Modal &  &  &  & Cross-Modal &  &  & \\
 & \textbf{Mix} &  & \textbf{Mix-ID} &  & \textbf{I$\rightarrow$V} &  & \textbf{\textbf{V$\rightarrow$I}} & \\
 & \textbf{R1} & \textbf{mAP} & \textbf{R1} & \textbf{mAP} & \textbf{R1} & \textbf{mAP} & \textbf{R1} & \textbf{mAP}\\
DDAG \cite{DDAG} &  69.34 & 63.46 & 68.06  & 61.80 & 40.14 & 26.88 & 45.17 & 29.94\\
MPANet \cite{wu2021Nuances}  & 96.76& 53.73 & 38.79 & 29.03  & 46.48 & 30.96 & 52.15 & 37.00\\
DEEN \cite{LLCM}&  97.63 & \textbf{64.69} & 49.11 &  38.38  & 69.49 & 54.75&73.95 & 58.75\\ 

SAAI  \cite{SAAI}   & 96.61 & 60.05 & 40.86 &  31.27  & 59.37 & 45.65 & 64.37 & 48.60\\ 
IDKL  \cite{IDKL}   & 97.73 & 62.46 & 38.88 &  33.85  & 62.53 & 49.33 & 70.36 & 55.04 \\ 
IDKL + KR & - & - & - & - & 70.72 & 65.19 & 72.22 & 66.43\\ \hline
\textbf{\NM} & \textbf{97.80} & 64.45 & \textbf{57.1} & \textbf{45.71} & 65.76 & 51.08 & 70.79 & 56.61\\
\textbf{\NM }+ AIM & - & - & - & - & 66.11 & 62.89 & 74.10 & \textbf{68.20}\\
\textbf{\NM} + KR & - & - & - & - & \textbf{73.72} & \textbf{65.36} & \textbf{76.14} & 65.46\\
\hline
\end{tblr}
}
  \vspace{-0.3cm}
  \caption{LLCM dataset.}
  \label{tab:llcm}
  \end{subtable}%
  \vspace{-0.3cm}
  \caption{Performance of SOTA VI-ReID techniques in mixed, cross, and uni-modal settings on the (a) RegDB and (b) LLCM datasets.}
  \vspace{-.5cm}
\end{table*}


\noindent\textbf{(a) Mixed-Modal Results.}
Tables \ref{tab:SYSU}, \ref{tab:regdb}, and \ref{tab:llcm} show the performance of SOTA alongside our \NM method across various mixed-modal settings with an infrared query (visible query results are available in supplementary materials) on SYSU-MM01(single-shot), RegDB, and LLCM, respectively. These cross-dataset results highlight the effectiveness of our approach across different VI-ReID methods. In these mixed settings, the difficulty level increases from setting 1 to 4 due to the progressive reduction in positive samples within the same modality as the query, challenging each method's robustness in mixed scenarios. Notably, this analysis also enables us to examine each method's strengths and weaknesses in novel contexts; for instance, SGEIL \cite{shape-Erase23} performs best in the ``MIX'' setting but exhibits a significant performance drop in the ``Mix-ID'' setting on the SYSU-MM01 dataset compared to other methods.
The proposed \NM consistently outperforms existing methods across nearly all mixed settings, highlighting its capacity to bridge the modality gap between visible and infrared images by learning modality-erased features while enhancing same-modality matching through modality-related information. Importantly, modality-related features perform less effectively in Mix-ID and Cross-modal settings, where the same person does not appear in both modalities, limiting their usefulness for cross-modal matching. However, they help correct erroneous intra-modal matches by adjusting the similarity scores produced by modality-erased features. This supports our approach: modality-related information should be excluded from modality-erased features for robust inter-modal matching.

\begingroup
\tabcolsep = 3.0pt
\def\arraystretch{0.95}
\begin{table*}[!pb]
\centering
\begin{minipage}[t]{0.59\linewidth}
\centering
\resizebox{\linewidth}{!}{%
\begin{tabular}{|l|cc|cc|cc|cc|cc|}
\hline
\multirow{2}{*}{\textbf{Method}} & \multicolumn{2}{c|}{\textbf{Cross-modal}} & \multicolumn{2}{c|}{\textbf{Mix}} & \multicolumn{2}{c|}{\textbf{Mix-Cam}} & \multicolumn{2}{c|}{\textbf{Mix-Cam-ID}} & \multicolumn{2}{c|}{\textbf{Mix-ID}}  \\ \cline{2-11} 
 & \textbf{R1} & \textbf{mAP} & \textbf{R1} & \textbf{mAP} & \textbf{R1} & \textbf{mAP} & \textbf{R1} & \textbf{mAP} & \textbf{R1} & \textbf{mAP} \\ \hline
DDAG\cite{DDAG} & \Gr{53.29} & 50.61 & 91.91 & 57.60 & 80.81 & 53.54 & 70.41 & 45.14 & 23.92 & 29.09 \\ 
DDAG(f)+\NM & 53.21 & \Gr{50.86} & \Gr{92.04} & \Gr{60.29} & \Gr{81.27} & \Gr{55.07} & \Gr{72.05} & \Gr{48.68} & \Gr{35.71} & \Gr{34.81} \\ 
DDAG+\NM & \Bl{54.61} & \Bl{51.27} & \Bl{92.52} & \Bl{62.52} & \Bl{81.74} & \Bl{57.16} & \Bl{72.96} & \Bl{51.93} & \Bl{38.79} & \Bl{38.79} \\ \hline
MPANet\cite{wu2021Nuances} & 69.34 & 66.42 & 95.46 & 72.82 & 89.45 & 70.34 & 81.95 & 63.81 & 49.06 & 51.08 \\
MPANet(f)+\NM & \Gr{69.66} & \Gr{66.51} & \Gr{95.02} & \Gr{77.01} & \Gr{90.43} & \Gr{72.63} & \Gr{83.98} & \Gr{69.80} & \Gr{63.96} & \Gr{60.75} \\ 
MPANet+\NM & \Bl{70.93} & \Bl{68.18} & \Bl{95.80} & \Bl{78.67} & \Bl{91.34} & \Bl{74.98} & \Bl{83.95} & \Bl{70.55} & \Bl{66.00} & \Bl{61.16} \\ \hline
DEEN\cite{LLCM} & 72.55 & \Gr{68.59} & 94.07 & 74.06 & 87.50 & 72.09 & 79.50 & 65.66 & 56.77 & 55.30 \\ 
DEEN(f)+\NM & \Gr{72.63} & 68.49 & \Gr{94.55} & \Gr{78.91} & \Gr{88.38} & \Gr{75.27} & \Gr{82.77} & \Gr{70.9} & \Gr{65.8} & \Gr{60.05} \\ 
DEEN+\NM & \Bl{73.00} & \Bl{69.00} & \Bl{95.00} & \Bl{79.00} & \Bl{89.00} & \Bl{76.00} & \Bl{83.00} & \Bl{71.00} & \Bl{66.00} & \Bl{61.00} \\ \hline
SGEIL\cite{shape-Erase23} & 73.34 & 67.44 & 94.82 & 71.03 & 89.22 & 70.16 & 79.33 & 61.34 & 46.60 & 48.37 \\
SGEIL(f)+\NM & \Gr{73.40} & \Gr{67.81} & \Gr{94.90} & \Gr{76.45} & \Gr{90.14} & \Gr{73.43} & \Gr{82.63} & \Gr{68.86} & \Gr{64.14} & \Gr{60.15} \\ 
SGEIL+\NM & \Bl{74.08} & \Bl{69.19} & \Bl{95.33} & \Bl{78.01} & \Bl{91.16} & \Bl{74.29} & \Bl{83.97} & \Bl{69.40} & \Bl{65.37} & \Bl{61.50} \\ \hline
SAAI\cite{SAAI} & \Gr{73.87} & \Gr{69.71} & 96.01 & 74.59 & 90.63 & 72.51 & 84.30 & 65.94 & 52.49 & 53.30 \\
SAAI(f)+\NM & 73.64 & 69.51 & \Gr{96.21} & \Gr{79.12} & \Gr{91.7} & \Gr{75.03} & \Gr{87.0} & \Gr{71.22} & \Gr{63.72} & \Gr{60.91} \\ 
SAAI+\NM & \Bl{74.25} & \Bl{71.08} & \Bl{97.27} & \Bl{80.47} & \Bl{92.66} & \Bl{76.81} & \Bl{88.51} & \Bl{73.24} & \Bl{66.23} & \Bl{62.45} \\ \hline
IDKL\cite{IDKL} & 72.05 & 69.67 & 95.27 & 72.81 & 90.15 & 71.87 & 80.73 & 63.54 & 47.34 & 50.86 \\
IDKL(f)+\NM & \Gr{72.65} & \Gr{70.2} & \Gr{95.58} & \Gr{74.24} & \Gr{90.51} & \Gr{71.78} & \Gr{83.77} & \Gr{65.41} & \Gr{52.63} & \Gr{55.57} \\ 
IDKL+\NM & \Bl{73.44} & \Bl{71.02} & \Bl{96.10} & \Bl{76.67} & \Bl{91.38} & \Bl{74.27} & \Bl{85.61} & \Bl{68.44} & \Bl{58.37} & \Bl{57.91} \\ 
\hline
\end{tabular}
}

\vspace{-0.5cm}
\caption{Accuracy of the proposed method and SOTA methods as the baseline on the SYSU-MM01 (single-shot setting) as I query.
"(f)" indicates that our proposed losses were not back-propagated through the baseline models.}
\label{tab:mix-results2}
\end{minipage}
\begin{minipage}[t]{0.4\linewidth}
\resizebox{\linewidth}{!}{
\begin{tblr}{
  colspec={|l|ll|ll|ll|},
  cell{1}{2} = {c=2}{c},
  cell{1}{4} = {c=2}{c},
  cell{1}{6} = {c=2}{c},
  cell{2}{2} = {c=6}{c},
  cell{2}{1} = {r=2}{m},
  stretch = 0,
  colsep = 3.5pt
}
\hline
Source:       &    \textbf{SYSU-MM01}                    &                & \textbf{RegDB} &               & \textbf{LLCM}  &                \\ \hline
Target:     &  \textbf{Market1501(V$\rightarrow$V)}              &                &                &               &                &                \\ \hline
            &  \textbf{R1}                      & \textbf{mAP}   & \textbf{R1}    & \textbf{mAP}  & \textbf{R1}    & \textbf{mAP}   \\ \hline
 DDAG\cite{DDAG}       & 82.39 & 55.57 & 11.99 & 3.32 & 53.40  & 20.18  \\
MPANet\cite{wu2021Nuances}   & 78.97& 51.61 & \textbf{18.17} & 4.79 & 56.91& 22.79 \\
DEEN \cite{LLCM}& 66.50  & 33.81   & 6.85  & 1.7& 58.81 & 23.72 \\
SGEIL \cite{shape-Erase23}  & 79.18& 48.72& -   & -  & -& -    \\
SAAI \cite{SAAI}        & 84.32 & 57.62 & 14.54  & 3.44& 55.68 &22.77  \\
IDKL\cite{IDKL}     &   76.66    & 49.59  & & & &  \\ \hline
\textbf{\NM}(E)            & 82.90 & 56.56  & 14.31   & 3.97 & 54.45 &22.52          \\
\textbf{\NM}(R)      & 83.46  & 56.85   & 16.29  & 4.08 & 59.56 & 25.93          \\
\textbf{\NM}(E+R) & \textbf{87.93}   & \textbf{62.41} & 17.7  & \textbf{5.02} & \textbf{61.37} & \textbf{27.88} \\ \hline
 Upper-bound& 95.1                             & 87.8           & 95.1           & 87.8          & 95.1           & 87.8         \\  
\hline
\end{tblr}
}

\vspace{-0.5cm}
\caption{Performance of techniques of V-I ReID on Cross-Dataset RGB Market1501 dataset. Columns indicate the training dataset, and rows show the model’s performance on the Market1501.}
\label{tab:uni-result}

\centering
\vspace{0.2cm}
\resizebox{0.8\linewidth}{!}
{
\begin{tblr}{
  colspec={|ccccc||cc||cc|},
  cell{1}{1} = {c=5}{halign = c},
  cell{1}{6} = {c=2}{halign = c},
  cell{1}{8} = {c=2}{halign = c},
  stretch = 0,
  colsep = 3.5pt
}
\hline
\textbf{Settings}&  & & & &\textbf{Cross-Modal} & &\textbf{Mix-Cam-ID} & \\ \hline
$\mathcal{L}_\text{yme}$ & $\mathcal{L}_\text{ymr}$ &$\mathcal{L}_\text{o}$ &$\mathcal{L}_\text{m}$ &$\mathcal{L}_\text{f}$ & \textbf{R1} & \textbf{mAP} & \textbf{R1} & \textbf{mAP} \\ \hline
\cmark &   &   &   &   & 69.74  & 66.38  & 80.57  & 62.51 \\
\cmark & \cmark &   &   &   &  69.28 & 66.09  & 83.61  & 64.57 \\
\cmark&   & \cmark  &   &   & 70.25  & 67.11  & 81.07  & 62.99 \\
\cmark&   &  & \cmark  &   & 71.72  & 67.60  & 78.55  & 60.38 \\
\cmark& \cmark  & \cmark  & \cmark  &   & 73.40  & 70.87  & 84.64  & 65.83\\
\cmark & \cmark  & \cmark & \cmark & \cmark & \textbf{73.43} & \textbf{70.92}  & \textbf{87.56}  & \textbf{72.70} \\ \hline
\end{tblr}
}
\vspace{-0.3cm}
\caption{Impact of losses on \NM performance.}
\label{tab:losses}
\end{minipage}
\end{table*}
\endgroup

To further evaluate the adaptability of our learning paradigm, we applied it to several SOTA VI-ReID methods as a baseline on the SYSU dataset. We tested it in two variations: (a) stopping gradient backpropagation of our proposed modality-related and erased learning losses and (b) enabling end-to-end training with combined gradients from the existing and our proposed losses to strengthen cross-modal matching by filtering out modality-related features from the backbone. Results in Table \ref{tab:mix-results2} indicate that modality-related learning significantly boosts performance in mixed-modal settings. In contrast, end-to-end modality-erased learning enhances both mixed and cross-modal matching. For example, our modality-related learning improves the mAP of SAAI\cite{SAAI} and IDKL \cite{IDKL} in "Mix-ID" settings more than 6\% and 4\%, respectively. 

\noindent\textbf{(b) Cross-Modal Results.}
To show that erasing modality-specific information from features makes them more proper for cross-modal matching, we measure the performance of \NM and compare it with SOTA VI-ReID in Tables \ref{tab:SYSU}, \ref{tab:regdb}, and \ref{tab:llcm}. Our experiments show that \NM outperforms these methods in various situations. Modality-specific sub-modules are not triggered during inference in cross-modal matching. For example, compared to the second-best approach for the "All Search" scenario, \NM outperforms by a margin of 2.3\% R1 and 3.3\% mAP without adding complexity to the model.
For the "I\!$\rightarrow$\!V" mode on RegDB, \NM achieves 91.4\% R1 accuracy and 85.5\% mAP. For the "V$\rightarrow$I" mode, our method also obtains 90.4\% R1 accuracy and 86.8\% mAP. The results validate the effectiveness of our proposed \NM and show that it can effectively reduce the discrepancy between the V and I modalities.

In addition, our modality-erased feature learning has the advantage that it can be used in different VI-ReID models to improve their performance in cross-modal settings without incurring overhead during testing. To show this adaptability, in Table \ref{tab:mix-results2}, in the first column, end-to-end training improves the performance of all methods in the SYSU-MM01 dataset. For example, \NM increases the mAP of SGEIL\cite{shape-Erase23} and SAAI\cite{SAAI} by 1.75\% and 1.37\%.


\begin{figure*}[t!]
\centering
\begin{subfigure}{.172\textwidth}
  \centering
  \includegraphics[width=\linewidth, height=2cm]{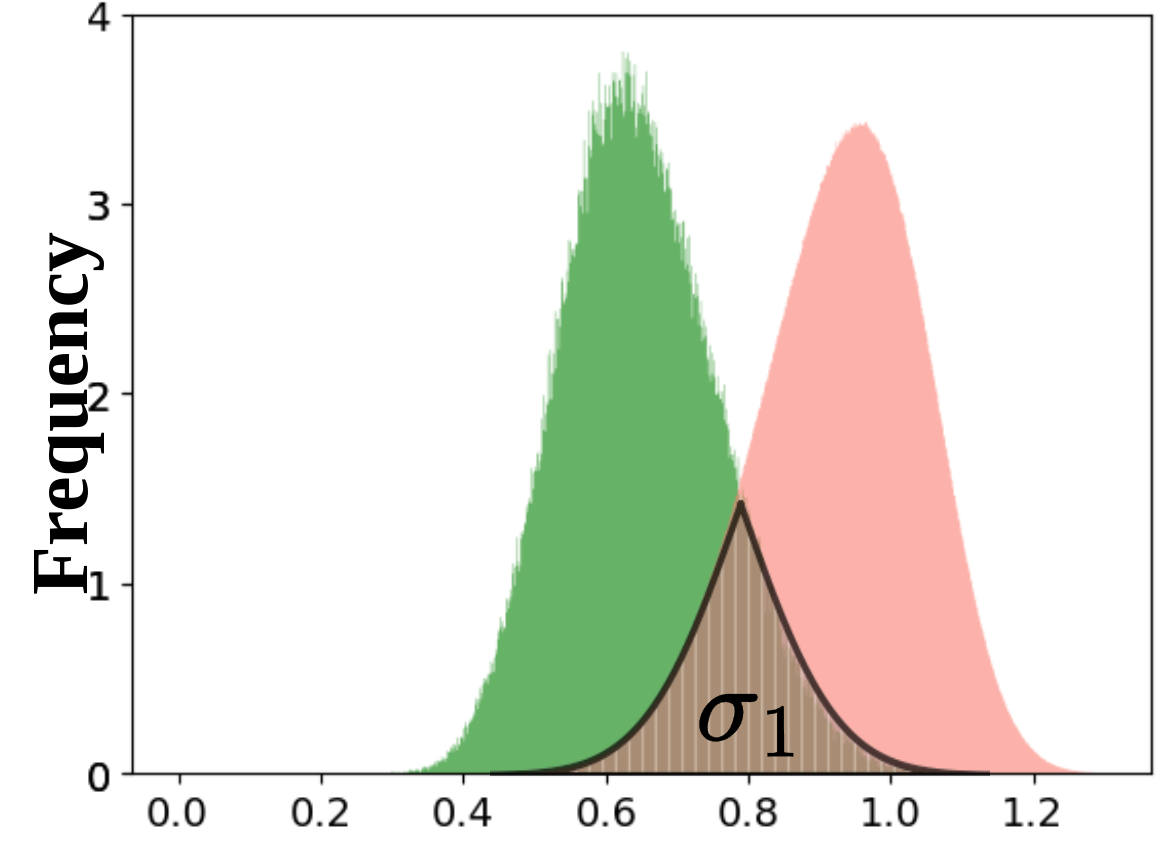}
  \vspace{-0.45cm}
  \caption{Baseline (Cross)}
  \label{fig:sub1}
\end{subfigure}%
\begin{subfigure}{.163\textwidth}
  \centering
  \includegraphics[width=\linewidth, height=2cm]{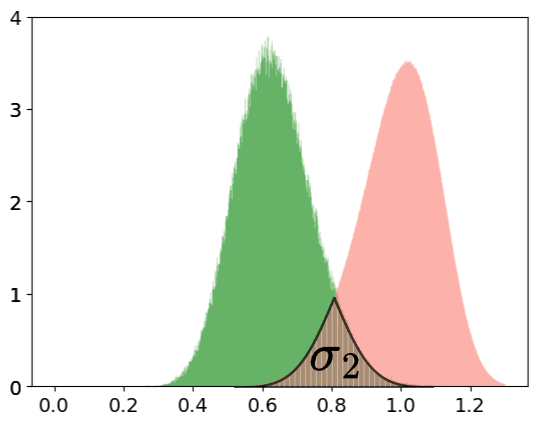}
  \vspace{-0.45cm}
  \caption{\NM (Cross)}
  \label{fig:sub1}
\end{subfigure}%
\begin{subfigure}{.163\textwidth}
  \centering
  \includegraphics[width=\linewidth, height=2cm]{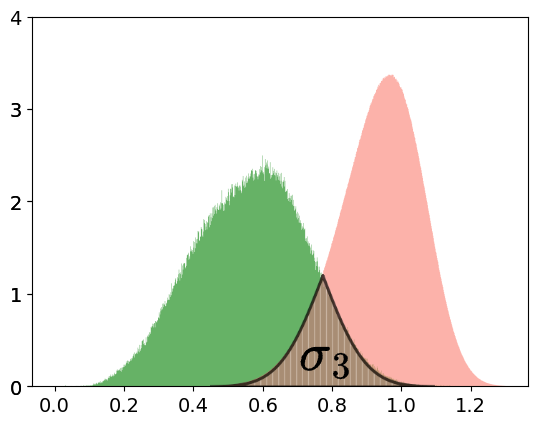}
  \vspace{-0.45cm}
  \caption{Baseline (Mix)}
  \label{fig:sub2}
\end{subfigure}%
\begin{subfigure}{.166\textwidth}
  \centering
  \includegraphics[width=\linewidth, height=2cm]{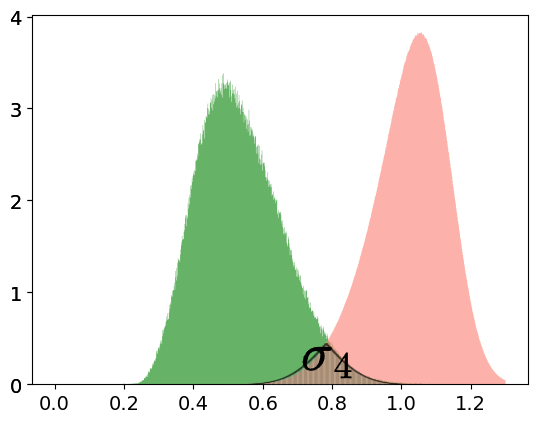}
  \vspace{-0.45cm}
  \caption{\NM (Mix)}
  \label{fig:sub2}
\end{subfigure}%
\begin{subfigure}{.166\textwidth}
  \centering
  \includegraphics[width=\linewidth, height=2cm]{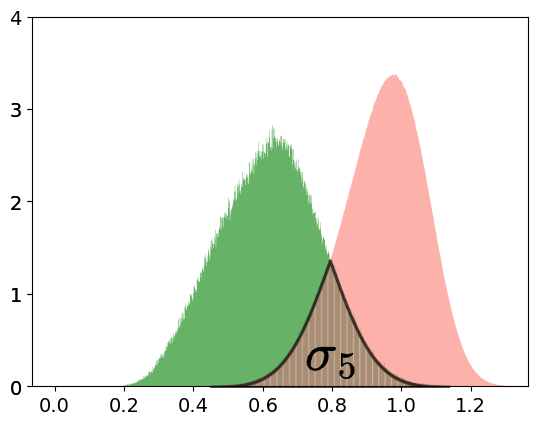}
  \vspace{-0.45cm}
  \caption{Baseline (Mix-Cam)}
  \label{fig:sub2}
\end{subfigure}%
\begin{subfigure}{.166\textwidth}
  \centering
  \includegraphics[width=\linewidth, height=2cm]{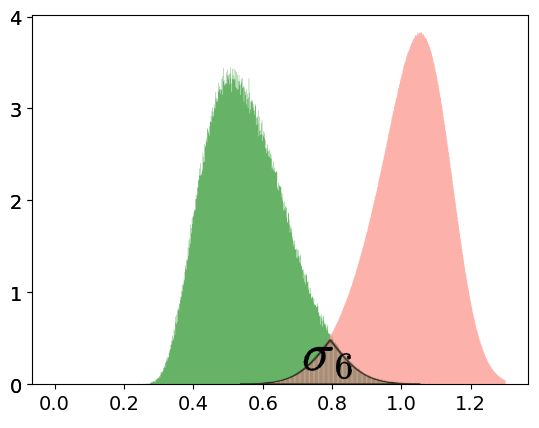}
  \vspace{-0.45cm}
  \caption{\NM (Mix-Cam)}
  \label{fig:sub2}
\end{subfigure}%
\\
\vspace{-0.3cm}
\caption{The intra-class (green) and inter-class (pink) distances distribution of features in different gallery settings. Here SAAI method \cite{SAAI} has been considered as the Baseline.
}
\label{fig:mixed-distance}
\vspace{-0.25cm}
\end{figure*}

\noindent\textbf{(c) Uni-Modal Results.}
To further illustrate the effectiveness of learning modality-related alongside modality-erased in \NM for uni-modal matching, we tested our trained model on the SYSU-MM01 dataset in V$\rightarrow$V and I$\rightarrow$I settings, as well as on an unseen visible dataset. 
Specifically, we used the Market1501 dataset \cite{market1501} to evaluate a model trained on the SYSU-MM01, RegDB, and LLCM datasets.
Table \ref{tab:SYSU} (column "Uni-Modal") presents these results for SYSU-MM01, demonstrating that our \NM approach outperforms other methods.
Table \ref{tab:uni-result} reports the R1 and mAP scores of various methods, while each column indicating the training dataset and each row showing the model’s performance on the Market1501 dataset.
Our \NM approach achieves the highest R1 and mAP scores, with values of 87.9\% and 62.4\%, respectively. When using only modality-related features (activating only the V branch) or only modality-erased features, the mAP scores are 56.8\% and 56.5\%, respectively. However, the integration of both feature types increases the mAP and R1 scores more than 3\%, demonstrating that this fusion improves generalization and provides a complementary representation of input images. 

\subsection{Ablation Studies}

\noindent{\textbf{Losses.}}
Table \ref{tab:losses} shows the impact of each loss component on performance. Using only the $\mathcal{L}_\text{y}$ as a baseline achieves results of 69.7\% R1 and 66.3\% mAP in cross-modal matching. Adding $\mathcal{L}_\text{ymr}$ improves Mix-Cam-ID R1 to 83.6\%, indicating $\mathcal{L}_\text{ymr}$’s benefit for intra-modality matching consistency, though cross-modal performance slightly decreases. Incorporating only $\mathcal{L}_\text{o}$ improves both cross- and mixed-modal performance, whereas $\mathcal{L}_\text{m}$ provides a more significant enhancement in Cross-Modal metrics. The combined effect of $\mathcal{L}_\text{y}$, $\mathcal{L}_\text{yme}$, $\mathcal{L}_\text{o}$, and $\mathcal{L}_\text{m}$ achieves 73.4\% R1 in cross-modal, showing these losses’ synergy. Finally, adding $\mathcal{L}_\text{f}$ achieves optimal results across all metrics, highlighting its important role in refining features for mixed-modal settings.

\begin{figure}[!th]
    \centering
    \includegraphics[width=\linewidth]{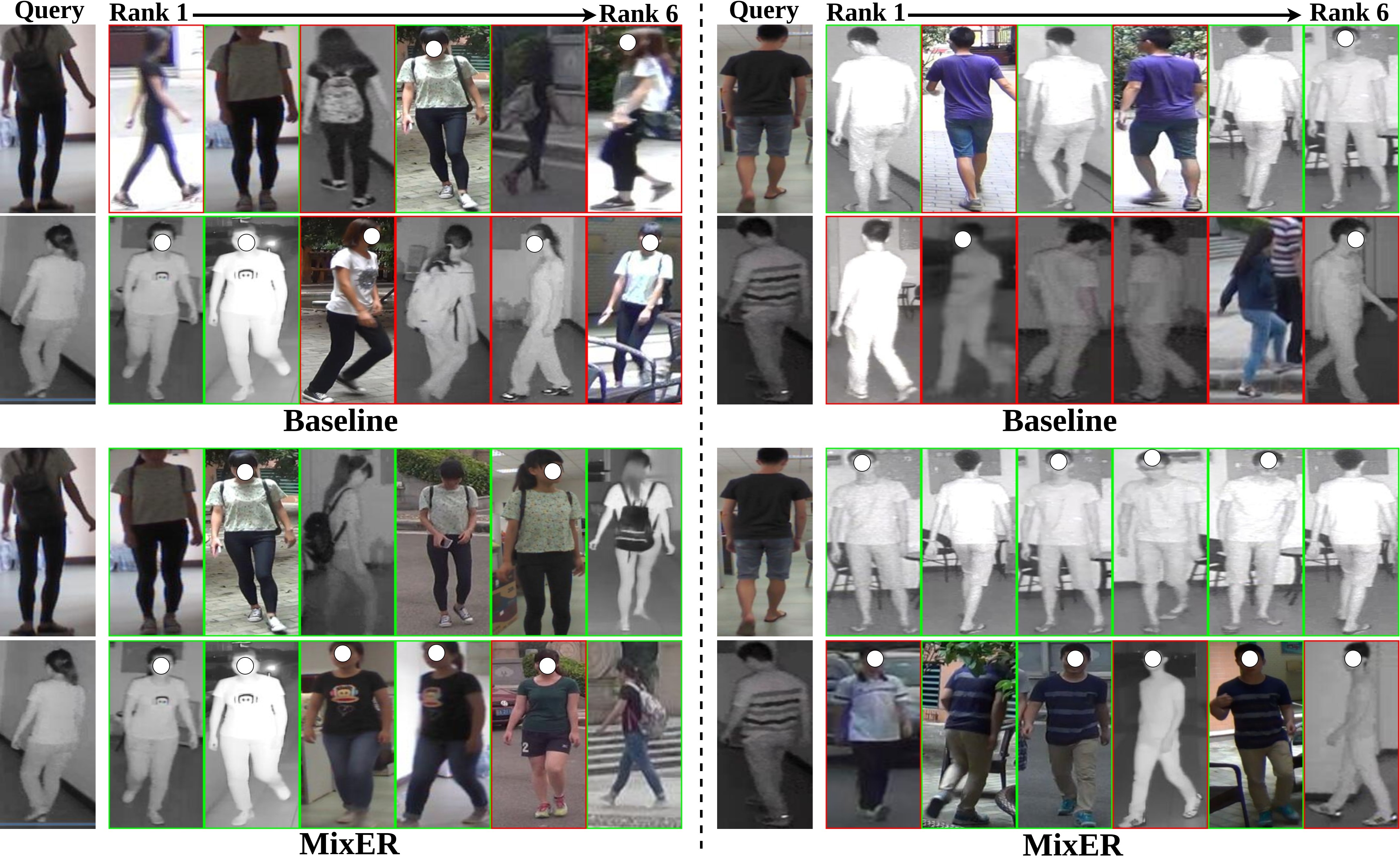}
    \vspace{-.75cm}
    \caption{Rank-6 retrieval results obtained by the baseline (top) and the proposed \NM (bottom) on SYSU dataset under Mix-Cam-ID (left) and Mix-ID (right) settings.}
    \label{fig:rank}
    \vspace{-0.65cm}
\end{figure}

\subsection{Qualitative Analysis}
\vspace{-0.2cm}
{\noindent \textbf{Feature distribution.}}
To examine the effectiveness of our method in cross-modal and mixed-modal matching, we visualize inter-class and intra-class distance distributions on the SYSU dataset, as shown in Fig.\ref{fig:mixed-distance}(a-f). Compared to the baseline, \NM reduces both the mean and variance of intra-class distances across all gallery settings, decreasing the area of overlap ($\sigma_2\!<\!\sigma_1$, $\sigma_4\!<\!\sigma_3$, and $\sigma_6\!<\!\sigma_5$). This demonstrates that \NM more effectively reduces intra-class distances, thereby decreasing modality discrepancy in modality-erased features and enhancing identity discrimination in modality-related features. Furthermore, UMAP visualizations (see supplementary material) show that \NM better separates identities and reduces modality discrepancy within the mixed-modal gallery.

{\noindent \textbf{Retrieval result.}}
To further show the effectiveness of \NM, we present retrieval results from our model on the SYSU-MM01 dataset in Fig.\ref{fig:rank}. 
In the retrieval results, green boxes indicate correct matches, while red boxes represent incorrect ones.
Overall, combining modality-erased and modality-related features at the bottom significantly improves the ranking results, with more correct matches appearing in the higher rank positions compared to the baseline at the top. While modality-erased learning aims to find the closest matches based solely on shared attributes between visible and infrared images, it may mistakenly select incorrect images. Modality-related features refine this by penalizing incorrect matches based on attributes specific to the query modality.
For example, in the second row at the top, the model ranks images of a person wearing a T-shirt and shorts, albeit in different colors, as the top matches. However, in the second row at the bottom, the model produces better matches by leveraging modality-erased and modality-related information.

\vspace{-0.2cm}
\section{Conclusion}
\vspace{-0.2cm}
In this paper, we proposed a new evaluation setting, Mix-Modal ReID, to address the real-world challenge of person re-identification in mixed visible-infrared galleries, which more accurately reflects complex surveillance conditions. We introduced a modality-erased and modality-related feature learning approach to improve robustness across modalities. Our experiments on SYSU-MM01, RegDB, and LLCM demonstrated that our method outperforms current state-of-the-art VI-ReID methods in both mixed- and cross-modal settings, highlighting the limitations of existing methods under these new conditions. Our approach thus provides a promising foundation for more effective and adaptable VI-ReID systems in real-world applications.

{
    \small
    \bibliographystyle{ieeenat_fullname}
    \bibliography{main}
}
\clearpage
\newtheorem{theorem}{Theorem}

\newtheorem{hyp}{Hypothesis}
\newtheorem{Property}{P}

\appendix
\numberwithin{equation}{section}
\numberwithin{figure}{section}
\numberwithin{table}{section}
\setcounter{page}{1}
\setcounter{thm}{0}

\maketitlesupplementary

\begin{equation}
\begin{aligned}
    \max_{Z^e_{m},Z^r_{m}} \{ \overbrace{ \I(Z^e_{m};Y|M)\!- \!\lambda_1\I(Z^e_{m}; M)}^{\textit{Modality-Erased Learning} } + 
    \!\overbrace{\I(Z^r_{m};Y;M)}^{\textit{Modality-Related Learning}}  - \overbrace{\lambda_2 \I(Z^e_{m}; Z^r_{m})}^{\text{Orthogonal Feature Learning}}\}.
\end{aligned}
\end{equation}

\section{Proofs:}
\label{sec:proofs}

In Section \ref{sec:proposed}, our model is designed to represent input images through two distinct and complementary feature types: modality-related and modality-erased representations. To ensure the independence of these representations, we minimize the mutual information (MI) between the data distributions of them. This is achieved by applying an orthogonal loss to their feature representations. The modality-related features are intended to capture ID-aware information specific to the modality, whereas the modality-erased features are explicitly designed to exclude any modality-specific information.
To ensure that the extracted features satisfy these conditions, we use constrained optimization to maximize the mutual information between the joint distribution of modality-related and modality-erased features and the label distribution. In this section, we prove that our proposed loss functions meet these constraints and align with the properties of mutual information.

\subsection{‌Backgrounds}
The following highlights the main properties of mutual information:

\begin{Property} [{\bf Nonnegativity}]
\label{p1}
For every pair of random variables $X$ and $Y$:
\begin{equation}
    \I(X;Y) \geq 0
\end{equation}
\end{Property}
\begin{Property} \label{p2}
For random variables $X$, $Y$ that are independent:
\begin{equation}
    \I(X;Y) = 0.
\end{equation}
\end{Property}

\begin{Property} [{\bf Monotonicity}] \label{p3}
For every three random variables $X$, $Y$ and $Z$:
\begin{equation}
    \I(X;Y;Z) \leq \I(X;Y)
\end{equation}
\end{Property}

\begin{Property} \label{p4}
For every three random variables $X$, $Y$ and $Z$, the mutual information of joint distribution $X$ and $Z$ to $Y$ is (Fig. \ref{fig:s_mi1}):
\begin{equation}
    \I(X,Z;Y) = \I(X;Y) + \I(Z;Y) - \I(X;Z;Y)
\end{equation}
\end{Property}

\begin{figure}[h!]
    \centering
    \begin{subfigure}{.48\linewidth}
      \centering
      \includegraphics[height=0.5\linewidth]{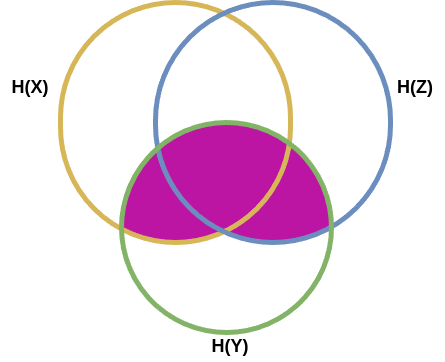}
      \caption{$\I(X,Z;Y)$}
      \label{fig:s_mi1}
    \end{subfigure} %
    \hspace{0.2cm}
    \begin{subfigure}{.48\linewidth}
      \centering
      \includegraphics[height=0.5\linewidth]{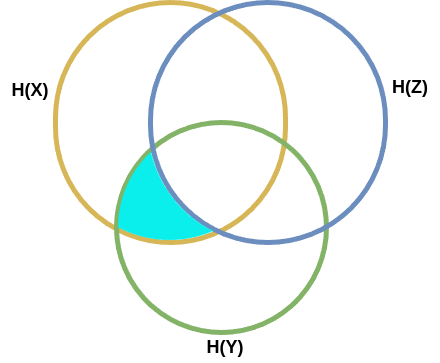}
      \caption{$\I(X;Y|Z)$}
      \label{fig:s_mi2}
    \end{subfigure}
    \vspace{-0.15cm}
    \caption{Venn diagram of theoretic measures for three variables $X$, $Y$, and $Z$, represented by the left, right and bottom circles, respectively.    }
    \label{fig:s_mi}
\end{figure}


\begin{Property} [{\bf Conditional}] \label{p5}
For every three random variables $X$, $Y$ and $Z$, the conditional mutual information is(Fig. \ref{fig:s_mi2}):
\begin{equation}
    \I(X;Y|Z) = \I(X;Y) - \I(X;Y;Z)
\end{equation}
\end{Property}

Also, we described some hypotheses that are used in our learning:
\begin{hyp} \label{hyp:first}
Following \cite{shape-Erase23}, the orthogonality between $\mathbf{z}_m^r$ and $\mathbf{z}_m^e$ can be regarded as a relaxation of independence:
\begin{equation*}
    \forall (\mathbf{z}_m^r, \mathbf{z}_m^e) \sim (Z_m^r, Z_m^e), \ \mathbf{z}_m^r, \mathbf{z}_m^e \perp \mathbf{z}_m^e \ \Rightarrow \ \I (Z_m^r;Z_m^e) \simeq 0
\end{equation*}
\end{hyp}

\begin{hyp} \label{hyp:second}
Following \cite{shape-Erase23, MI2}, we posit that if \( Z \) is a representation of \( X \), then \( Z \) is conditionally independent of all other variables in the system given \( X \). This is formally expressed as:
\begin{equation}
    \forall A, B \quad I(A; Z \mid X, B) = 0.
\end{equation}
\end{hyp}

\begin{defn} [{\bf Sufficiency}] \label{def:suff}
A representation Z of X is sufficient for Y if and only if:
\begin{equation}
    \I(X;Y|Z) = 0 \Longleftrightarrow \I(X;Y) = \I(Z;Y)
\end{equation}
\end{defn}

Any model with access to a sufficiently informative representation Z must be able to predict Y with at least the same level of accuracy as if it had access to the original data X. Representation Z is considered sufficient for Y if and only if the task-relevant information remains unchanged during the encoding process.
If the cross-entropy loss between Z and Y is minimized, then, as suggested in \cite{MI2}, Z can be assumed to be a sufficient representation of X for the task Y.

\begin{defn} [{\bf Data Processing Inequality}] 
Let three random variables form the Markov chain $Y \rightarrow X\rightarrow Z$, implying that the conditional distribution of $Z$ depends only on $X$ and is conditionally independent of $X$, we have:
\begin{equation}
    \I(X;Y) \geq \I(Z;Y).
\end{equation}
\end{defn}
The data processing inequality (DPI) is a fundamental inequality in information theory that states the mutual information between two random variables cannot increase through processing.

\begin{thm} \label{thm:1m_supp} If $Z^e_{m}$ and $Z^r_{m}$ are independent, then $\I(Z^e_{m},Z^r_{m};Y)=\I(Z^e_{m};Y) + \I(Z^r_{m};Y)$.
\end{thm}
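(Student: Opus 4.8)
The plan is to reduce the statement to a single vanishing interaction term by invoking the inclusion--exclusion identity for mutual information, and then to kill that term using independence. Concretely, I would first apply Property~\ref{p4} with $X = Z^e_m$ and $Z = Z^r_m$, which gives
\begin{equation}
\I(Z^e_m, Z^r_m; Y) = \I(Z^e_m; Y) + \I(Z^r_m; Y) - \I(Z^e_m; Z^r_m; Y).
\end{equation}
The target identity is therefore equivalent to the single claim that the triple interaction term $\I(Z^e_m; Z^r_m; Y)$ equals zero, so the whole proof collapses onto establishing this one fact.

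To show $\I(Z^e_m; Z^r_m; Y) = 0$, I would squeeze it between two bounds. On one side, the monotonicity property (Property~\ref{p3}), applied with $Y$ playing the role of the third variable, yields $\I(Z^e_m; Z^r_m; Y) \le \I(Z^e_m; Z^r_m)$; and since $Z^e_m$ and $Z^r_m$ are independent by hypothesis, Property~\ref{p2} gives $\I(Z^e_m; Z^r_m) = 0$. On the other side, the non-negativity convention (Property~\ref{p1}), read on the information diagram of Fig.~\ref{fig:s_mi} where every region carries non-negative measure, gives $\I(Z^e_m; Z^r_m; Y) \ge 0$. Combining the two bounds forces $\I(Z^e_m; Z^r_m; Y) = 0$, and substituting back into the displayed identity completes the argument.

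The delicate step, and the one I would flag, is the lower bound $\I(Z^e_m; Z^r_m; Y) \ge 0$. The triple interaction information is not non-negative for arbitrary joint distributions --- it can be negative when a third variable induces dependence between two otherwise-independent variables --- so the squeeze is not valid in full generality. The cleanest honest reading is the one this paper already adopts in the main text when it writes ``since $\I(.;.)$ is non-negative'', namely the information-diagram convention under which each atomic region is treated as a non-negative measure. Under that convention, monotonicity together with independence forces the shared region of $Z^e_m$, $Z^r_m$ and $Y$ to have zero measure, which is exactly $\I(Z^e_m; Z^r_m; Y) = 0$. I would therefore state the result as holding within this convention, consistent with Hypothesis~\ref{hyp:first} that treats orthogonality as a relaxation of independence, rather than claim it for every joint law of $(Z^e_m, Z^r_m, Y)$.
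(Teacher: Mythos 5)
Your proof takes essentially the same route as the paper's: the inclusion--exclusion decomposition of Property~\ref{p4} followed by Property~\ref{p3} plus independence to bound the interaction term $\I(Z^e_m;Z^r_m;Y)$. The caveat you flag is genuine --- interaction information can be negative for general joint laws, and the paper's own proof only establishes the upper bound $\I(Z^e_m;Z^r_m;Y)\le 0$ before concluding the term vanishes --- so your explicit appeal to the non-negativity convention on the information diagram makes the argument slightly more complete than the one in the supplement, while remaining the same proof in substance.
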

\begin{proof}
For independent random variables $Z^e_{m}$ and $Z^r_{m}$, we have the following relationship:
\begin{equation*}
    \I(Z^e_{m}; Z^r_{m}) = 0.
\end{equation*}
Also, w.r.t to P \ref{p4}:
\begin{equation*}
    \I(Z^r_m,Z^e_m;Y) = \I(Z^r_m;Y) + \I(Z^e_m;Y) - \I(Z^r_m;Z^e_m;Y),
\end{equation*}
and P \ref{p3}:
\begin{equation*}
    \I(Z^r_m;Z^e_m;Y) \leq \I(Z^r_m;Z^e_m) = 0,
\end{equation*}
so, we have:
\begin{equation}
    \I(Z^e_{m},Z^r_{m};Y)=\I(Z^e_{m};Y) + \I(Z^r_{m};Y)
\end{equation}
\end{proof}

\subsection{Minimizing $\I(Z^e_{m};M)$}
In order to minimizing $\I(Z^e_{m};M)$, we use adversarial learning approach and convert $Z^e_{m} \in \mathcal{Z}^e_m$ to $Z^o_{m} \in \mathcal{Z}^o_m$ with deterministic and learnable function $\mathcal{W} : \mathcal{Z}^e_m \rightarrow \mathcal{Z}^o_m$:
\begin{equation}
    Z^o_{m} = \mathcal{W}(Z^e_{m}; \theta_W).
\end{equation}
Then, we maximize the $\I(Z^o_{m};M)$ with a standard SGD approach through the optimization of $\theta_W$ and reversed SGD through the model's parameters. Maximizing $\I(Z^o_{m};M)$ is achieved by minimizing cross-entropy loss between the estimated modality label from $Z^o_{m}$ as $\hat{m}$ and ground truth label, $m$ ($\mathcal{L}_\text{m}$ in main manuscript). This approximation is formulated in {\bf Proposition \ref{prop:cross}}. 
\begin{prop}
\label{prop:cross}
Let $Z$ and $Y$ be random variables with domains $\mathcal{Z}$ and $\mathcal{Y}$, respectively. Minimizing the conditional cross-entropy loss of predicted label $\hat{Y}$, denoted by $\mathcal{H}(Y; \hat{Y}|Z)$, is equivalent to maximizing the $\I(Z; Y)$
\end{prop}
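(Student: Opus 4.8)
The plan is to rewrite the mutual information in the form $\I(Z;Y) = H(Y) - H(Y\mid Z)$, where $H(Y)$ is the marginal entropy of the label distribution and $H(Y\mid Z)$ is the conditional entropy. Since the label distribution is fixed by the dataset and does not depend on the encoder or classifier parameters, $H(Y)$ is a constant during training. Consequently, maximizing $\I(Z;Y)$ is exactly equivalent to minimizing the conditional entropy $H(Y\mid Z)$, and the whole statement reduces to showing that minimizing the conditional cross-entropy loss $\mathcal{H}(Y;\hat Y\mid Z)$ drives down $H(Y\mid Z)$.

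To connect the cross-entropy loss to the conditional entropy, I would write the classifier output as a predictive distribution $q(y\mid z)$ and expand
\begin{equation*}
\mathcal{H}(Y;\hat Y\mid Z) = -\,\mathbb{E}_{p(z,y)}\big[\log q(y\mid z)\big] = H(Y\mid Z) + \mathbb{E}_{p(z)}\big[D_{\mathrm{KL}}\!\big(p(y\mid z)\,\|\,q(y\mid z)\big)\big].
\end{equation*}
This identity is the key algebraic step: it splits the cross-entropy into the intrinsic conditional entropy plus an averaged Kullback--Leibler divergence between the true posterior $p(y\mid z)$ and the learned predictor $q(y\mid z)$. Because the KL term is nonnegative (a direct consequence of the nonnegativity property already invoked in the paper), the cross-entropy is an upper bound on $H(Y\mid Z)$, with equality precisely when $q(y\mid z)=p(y\mid z)$. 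From here the conclusion follows: minimizing $\mathcal{H}(Y;\hat Y\mid Z)$ jointly over the classifier and the encoder forces the KL gap to zero and minimizes the tightened bound $H(Y\mid Z)$ through the choice of representation $Z$, which by the first paragraph is equivalent to maximizing $\I(Z;Y)$.

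I expect the main obstacle to be a careful treatment of this bound rather than the algebra. Minimizing the cross-entropy only provably minimizes $H(Y\mid Z)$ if the classifier family is rich enough to realize the true posterior, so that the KL term can genuinely be driven to zero; otherwise one only minimizes an upper surrogate. I would therefore state the equivalence under the standard assumption (as in the cited \cite{MI2}, and consistent with Definition \ref{def:suff} on sufficiency) that the classifier is sufficiently expressive, so that at the optimum $q=p$ and the cross-entropy coincides with the conditional entropy. Under that assumption the equivalence is exact; without it the statement should be read as minimizing a tight variational upper bound on $H(Y)-\I(Z;Y)$, which is the precise sense in which the loss maximizes mutual information in practice.
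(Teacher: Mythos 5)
Your proof follows essentially the same route as the paper's: write $\I(Z;Y)=\mathcal{H}(Y)-\mathcal{H}(Y|Z)$, drop the constant $\mathcal{H}(Y)$, and use the decomposition $\mathcal{H}(Y;\hat Y|Z)=\mathcal{H}(Y|Z)+D_{\mathrm{KL}}(\cdot)\ge \mathcal{H}(Y|Z)$ to conclude that minimizing the cross-entropy drives down the conditional entropy. Your closing remark about the bound only being provably tight when the classifier family can realize the true posterior is a more careful qualification than the paper offers, but the argument itself is the same.
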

\begin{proof}
    Let us define the MI as entropy,
    \begin{equation}
        \I(Z,Y) = \underbrace{\mathcal{H}(Y)}_{\delta} - \underbrace{\mathcal{H}(Y|Z)}_{\xi}
    \end{equation}
    Since the domain $\mathcal{Y}$ does not change, the entropy of the identity $\delta$ term is a constant and can therefore be ignored. Maximizing $\I(Z,Y)$ can only be achieved through a minimization of the $\xi$ term. We show that $\mathcal{H}(Y|Z)$ is upper-bounded by the cross-entropy loss, and minimizing such loss results in minimizing the $\xi$ term. By expanding its relation to the cross-entropy \cite{boudiaf2020unifying}:
    \begin{equation}
        \label{eq:cross}
        \mathcal{H}(Y; \hat{Y}|Z) = \mathcal{H}(Y|Z) + \underbrace{\mathcal{D}_{\text{KL}}(Y||\hat{Y}|Z)}_{\geq 0} , 
    \end{equation}
    where we have:
    \begin{equation}
        \mathcal{H}(Y|Z) \leq \mathcal{H}(Y; \hat{Y}|Z),
    \end{equation}
    where minimizing $\mathcal{H}(Y; \hat{Y}|Z)$ results minimizing $\mathcal{H}(Y|Z)$.
\end{proof}

\subsection{Maximizing $\I(Z^e_{m};Y|M)$}
Eq. \ref{eq:ym} of main manuscript can be rewritten w.r.t P \ref{p5} as :
\begin{equation}
\label{eq:mi_ye}
    \I(Z^e_{m};Y|M) = \I(Z^e_{m};Y) - \I(Z^e_{m};Y;M).
\end{equation}
For the second part RHS:
\begin{equation*}
    \I(Z^e_{m};Y;M) \leq \I(Z^e_{m};M),
\end{equation*}
where the $\mathcal{L}_\text{m}$ in main manuscript is minimized, results $\I(Z^e_{m};M) \simeq 0$ and :
\begin{equation*}
    \I(Z^e_{m};Y;M) \simeq 0.
\end{equation*}
So, the Eq. \ref{eq:mi_ye} is:
\begin{equation}
    \I(Z^e_{m};Y|M) \simeq \I(Z^e_{m};Y).
\end{equation}
To maximize the $\I(Z^e_{m};Y)$, the $\mathcal{L}_\text{meid}$ is minimized w.r.t to {\bf Proposition \ref{prop:cross}}.

\subsection{Minimizing $\I(Z^r_{m};Y|M)$}
To enforce the modality-related features \( Z^r_{m} \) to leverage identity-aware information that is dependent on modality (i.e., specific identity-discriminative information), we minimize the amount of identity-aware information in these features that disregards the modality. Below, we demonstrate that \( \I(Z^r_{m}; Y \mid M) \) is upper-bounded by zero if the features \( Z^r_{m} \) are sufficient for both tasks \( Y \) (identity) and \( M \) (modality) simultaneously. To ensure that the modality-related features \( Z^r_{m} \) serve as sufficient representations of the input images \( X \) for both detecting modality and identifying identity, the loss function \( \mathcal{L}_{\text{mrid}} \) (as defined in the main manuscript) is applied to these features.

\begin{thm} \label{thm:2}
    If the representation \( Z^r_{m} \) of \( X \) is sufficient for both \( Y \) and \( M \), then:
    \begin{equation}
        \I(Z^r_{m}; Y \mid M) = 0.
    \end{equation}
\end{thm}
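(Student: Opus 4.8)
The plan is to sandwich the quantity: since mutual information is nonnegative (P~\ref{p1}), it suffices to produce the upper bound $\I(Z^r_m;Y\mid M)\le 0$ and then conclude equality. The natural opening move is to split the conditional term with the conditional identity P~\ref{p5}, writing $\I(Z^r_m;Y\mid M)=\I(Z^r_m;Y)-\I(Z^r_m;Y;M)$. This recasts the theorem as the claim that the pairwise information $\I(Z^r_m;Y)$ is entirely absorbed by the interaction term $\I(Z^r_m;Y;M)$, i.e. that every bit $Z^r_m$ reveals about the identity is already mediated by the modality $M$. Note that monotonicity (P~\ref{p3}) gives the easy direction $\I(Z^r_m;Y;M)\le\I(Z^r_m;Y)$ for free, so $\I(Z^r_m;Y\mid M)\ge 0$ holds automatically; the substance of the theorem is the reverse inequality.

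Next I would feed in the hypotheses. Sufficiency of $Z^r_m$ for $Y$ (Definition~\ref{def:suff}) lets me replace $\I(Z^r_m;Y)$ by $\I(X;Y)$, and sufficiency for $M$ gives $\I(Z^r_m;M)=\I(X;M)$ together with the vanishing conditionals $\I(X;Y\mid Z^r_m)=0$ and $\I(X;M\mid Z^r_m)=0$. Because $Z^r_m=\mathcal F_m(\mathbf z_m)$ is a representation of $X$, Hypothesis~\ref{hyp:second} supplies $\I(Y;Z^r_m\mid X,M)=0$, so $Y\to X\to Z^r_m$ stays a Markov chain after conditioning on $M$; the data-processing inequality then gives the anchoring bound $\I(Z^r_m;Y\mid M)\le\I(X;Y\mid M)$. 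Chaining these, the goal reduces to showing that the residual in the decomposition $\I(X;Y\mid M)=\I(Z^r_m;Y\mid M)+\I(X;Y\mid Z^r_m,M)$ saturates, i.e. $\I(X;Y\mid Z^r_m,M)=\I(X;Y\mid M)$.

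The hard part will be exactly this saturation step — showing that conditioning additionally on $Z^r_m$ removes none of the $X$--$Y$ dependence left after $M$, which is equivalent to the reverse inequality $\I(Z^r_m;Y;M)\ge\I(X;Y)$ flagged above. This is where simultaneous sufficiency for $Y$ and $M$ must do real work, and where I would lean on the joint (``doubled-label'') structure: since $Z^r_m$ is trained to be discriminative only for $Y'=(Y,M)$ via $\mathcal L_{\text{mrid}}$, its identity content is available solely in conjunction with the modality label, so the identity information it carries that disregards $M$ should collapse. Making this precise is delicate because interaction information is not monotone and conditioning can both create and destroy dependence, so the argument cannot rest on conditional DPI alone; I expect to need the explicit factorization of the joint distribution implied by treating $Z^r_m$ as a (minimal) sufficient statistic for the joint label, after which nonnegativity (P~\ref{p1}) closes the sandwich to give $\I(Z^r_m;Y\mid M)=0$.
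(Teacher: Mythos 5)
Your proposal sets up the problem correctly but does not prove the theorem: all of the substance is deferred to the ``saturation step'' $\I(X;Y\mid Z^r_m,M)=\I(X;Y\mid M)$ (equivalently, $\I(Z^r_m;Y;M)\ge \I(Z^r_m;Y)$), which you explicitly flag as ``the hard part'' and close only with ``I expect to need the explicit factorization\dots''. Everything you actually establish --- the split via P~\ref{p5}, the trivial lower bound, the DPI bound $\I(Z^r_m;Y\mid M)\le \I(X;Y\mid M)$ --- uses only the fact that $Z^r_m$ is a function of $X$, not the simultaneous sufficiency for $Y$ and $M$, and that is precisely where the theorem has to live. Moreover, the missing step cannot be supplied from Definition~\ref{def:suff} and Hypothesis~\ref{hyp:second} alone: take $X=(Y,M)$ with $Y$ and $M$ independent and $Z^r_m=X$. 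This representation is sufficient for both $Y$ and $M$ and satisfies every stated hypothesis, yet $\I(Z^r_m;Y\mid M)=\mathcal{H}(Y)>0$. So whatever closes your sandwich must come from structure beyond the two sufficiency assumptions (presumably the joint-label training via $\mathcal{L}_\text{mrid}$ that you allude to), and the proposal does not say what that structure is or how it would be formalized.

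For comparison, the paper's proof takes a different route: it starts from the representation identities $\I(Z^r_m;Y\mid X)=0$ and $\I(Z^r_m;M\mid X)=0$, applies the chain rule $\I(Z^r_m;Y\mid X)=\I(Z^r_m;Y;M\mid X)+\I(Z^r_m;Y\mid X,M)$, eliminates the interaction term using Hypothesis~\ref{hyp:second}, and arrives at $\I(Z^r_m;Y\mid X,M)=0$, i.e.\ $\I(Z^r_m;Y\mid M)=\I(Z^r_m;Y;X\mid M)$. Its final assertion --- that this common value equals zero --- is the same leap you could not make, rewritten in conditional-interaction language, and it is likewise not licensed by the preceding steps (the counterexample above applies verbatim). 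So you stopped exactly where the paper's own argument is thinnest. A sound version would either add a hypothesis strong enough to exclude that counterexample, or weaken the conclusion to the identity $\I(Z^r_m;Y\mid M)=\I(Z^r_m;Y;X\mid M)$, which is what both derivations actually deliver.
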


\begin{proof}
    From the definition of a sufficient representation \( Z^r_{m} \) for a task \( M \) (Definition \ref{def:suff}), we have:
    \begin{equation}\label{eq:Z_Y}
        \I(X; M \mid Z^r_{m}) = 0 \Longleftrightarrow \I(X; M) = \I(Z^r_{m}; M) \Rightarrow \I(Z^r_{m}; M \mid X) = 0.
    \end{equation}
    Similarly, for task \( Y \), we have:
    \begin{equation} \label{eq:Z_M}
        \I(Z^r_{m}; Y \mid X) = 0.
    \end{equation}
    Expanding \( \I(Z^r_{m}; Y \mid X) \), we obtain:
    \begin{equation} \label{eq:fff}
        \I(Z^r_{m}; Y \mid X) = \I(Z^r_{m}; Y; M \mid X) + \I(Z^r_{m}; Y \mid X, M) = 0.
    \end{equation}
    For the first term on the RHS of Eq.~\ref{eq:fff}, we have:
    \begin{equation*}
        \I(Z^r_{m}; Y; M \mid X) = \I(Z^r_{m}; M \mid X) - \I(Z^r_{m}; M \mid X, Y) = 0,
    \end{equation*}
    where \( \I(Z^r_{m}; M \mid X) = 0 \) follows from Eq.~\ref{eq:Z_M}, since \( Z^r_{m} \) is a sufficient representation of \( X \) for task \( M \), and \( \I(Z^r_{m}; M \mid X, Y) = 0 \) is due to Hypothesis \ref{hyp:second}.
    For the second term on the RHS of Eq.~\ref{eq:fff}, we have:
    \begin{equation}
        \I(Z^r_{m}; Y \mid X, M) = \I(Z^r_{m}; Y \mid M) - \I(Z^r_{m}; Y; X \mid M) = 0.
    \end{equation}
    Since \( \I(Z^r_{m}; Y; X \mid M) \leq \I(Z^r_{m}; Y \mid M) \), and \( \I(Z^r_{m}; Y \mid M) = \I(Z^r_{m}; Y; X \mid M) \), it follows that:
    \begin{equation}
        \I(Z^r_{m}; Y; X \mid M) = 0.
    \end{equation}
    Thus, \( \I(Z^r_{m}; Y \mid M) = 0 \), completing the proof.
\end{proof}

\section{Additional Experiments}
\subsection{Implementation Details of Open-Source SOTA Methods}

In this section, we present implementation details for each open-source state-of-the-art (SOTA) method used in our manuscript. For each method, we use the hyperparameter based on their paper or official code in GitHub:

\begin{itemize}
    \item {\bf DDAG} \cite{DDAG}: This method employs ResNet-50 as the backbone with stride one at the last layer, with input images resized to \(288 \times 144\) pixels that are augmented with zero-padding and horizontal flipping. Based on the DDAG paper\cite{DDAG}, 8 people with 4 V and 4 I images are selected in the batch, and $p=3$ is set. 

    \item {\bf DEEN} \cite{LLCM}: A modified ResNet-50 is used as the backbone, enhanced with DEE modules that introduce two additional branches to the network. Input images are resized to \(344 \times 144\) pixels. During training, augmentations such as Random Erase and Random Channel augmentation are applied. At inference time, the original image and its horizontally flipped counterpart are both processed through the backbone, and the average of the extracted features is used as the final representation. For evaluation under our mixed-modal settings, we removed the flipping process and instead concatenated the features from all DEE branches to create the final representative features.

    \item {\bf MPANet} \cite{wu2021Nuances}: This method also employs ResNet-50 as the backbone, with an additional convolutional layer designed to detect more discriminative regions in the feature space. The final representative feature vector is constructed by concatenating part features and global features obtained from a Global Average Pooling (GAP) layer. Input images are resized to \(344 \times 144\) pixels, and augmentations such as Random Erase and Random Channel augmentation are applied during training.

    \item {\bf SGEIL} \cite{shape-Erase23}: This method employs two ResNet-50 backbones, one for visible images and the other for infrared images, along with an additional ResNet-50 backbone for shape images. Input images are resized to \(288 \times 144\) pixels, with augmentations similar to those used in DEEN. During training, model weights are updated using SGD, while an exponential moving average (EMA) is simultaneously applied to update the backbone weights. At the end of training, the best performance between the SGD and EMA models is selected. Note that SGEIL requires shape images for training, and since this information is available only for the SYSU-MM01 dataset, its performance on RegDB and LLCM is not reported.

    \item {\bf SAAI} \cite{SAAI}: Similar to MPANet, this method uses ResNet-50 as the backbone, with an additional convolutional layer and learnable parameters to identify part prototypes. The final representative feature vector is obtained by concatenating part features and global features from the GAP layer. Input images are resized to \(288 \times 144\) pixels, and Random Erase and Random Channel augmentation are applied during training. An Affinity Inference Module is used during inference to rerank the gallery.

    \item {\bf IDKL} \cite{IDKL}: This method uses ResNet-50 as the backbone, with additional branches added to layers 3 and 4 to extract modality-specific features. Input images are resized to \(388 \times 144\) pixels, and Random Erase and Random Channel augmentation are applied during training. During inference, k-reciprocal encoding is applied to rerank the gallery, enhancing the retrieval process.
\end{itemize}

\subsection{Additional Mixed-Modal Results}
In the main manuscript, we reported the performance of mixed-modal settings for existing datasets with infrared query images and mixed-modal gallery images. Table \ref{tab:mix-results-v}, presents the performance with visible query. Across all mixed settings in SYSU-MM01, \NM consistently outperforms the compared methods, achieving the highest Rank-1 accuracy (R1) and mean Average Precision (mAP) scores. Specifically, for the Mix setting, our method achieves a notable improvement in mAP (87.29\%) compared to the next best method, IDKL (84.78\%), showcasing its robustness in handling mixed gallery conditions. In more challenging settings like Mix-Cam and Mix-ID, \NM{} significantly outperforms the SOTA, demonstrating its ability to adapt to modality and identity constraints effectively.

In the RegDB dataset, which focuses on visible-infrared retrieval, \NM{} achieves the highest scores across both the Mix and Mix-ID settings. The proposed method achieves a remarkable mAP of 92.78\% in the Mix setting, outperforming the best-performing baseline (IDKL) by over 4.4\%. Similarly, for the Mix-ID setting, \NM{} achieves a substantial improvement in mAP (81.42\%) compared to SAAI (68.82\%). On the LLCM dataset, \NM{} maintains competitive performance, achieving the highest scores in both the Mix and Mix-ID settings. In particular, \NM{} improves mAP in the Mix-ID setting to 45.18\%, which surpasses the previous best method, DEEN, by a notable margin (43.65\%). These results highlight the generalizability and robustness of our method in addressing varying cross-modal and identity constraints.

The proposed \NM{} consistently achieves superior performance across all datasets and settings, highlighting its effectiveness in extracting discriminative modality-related and modality-erased features. The substantial improvements in challenging settings, such as Mix-ID, underscore the effectiveness of the disentangling strategy employed by \NM, which allows it to handle both modality and identity variations effectively. Unlike competing methods, \NM achieves a balanced improvement in both Rank-1 accuracy and mAP, demonstrating its robustness in retrieval precision and ranking performance.


\begin{table*}[!h]
\centering
\resizebox{\textwidth}{!}{%
\begin{tabular}{|c|cc||cc||cc||cc||cc||cc||cc||cc|}
\hline
\multicolumn{1}{|l|}{} & \multicolumn{8}{c||}{SYSU-MM01} & \multicolumn{4}{|c||}{RegDB} & \multicolumn{4}{|c|}{LLCM} \\ \hline
\multirow{2}{*}{\textbf{Method}} & \multicolumn{2}{c||}{\textbf{Mix}} & \multicolumn{2}{c||}{\textbf{Mix-Cam}} & \multicolumn{2}{c||}{\textbf{Mix-Cam-ID}} & \multicolumn{2}{c||}{\textbf{Mix-ID}} & \multicolumn{2}{c||}{\textbf{Mix}} & \multicolumn{2}{c||}{\textbf{Mix-ID}} & \multicolumn{2}{c||}{\textbf{Mix}} & \multicolumn{2}{c|}{\textbf{Mix-ID}} \\ \cline{2-17} 
 & \textbf{R1} & \textbf{mAP} & \textbf{R1} & \textbf{mAP} & \textbf{R1} & \textbf{mAP} & \textbf{R1} & \textbf{mAP} & \textbf{R1} & \textbf{mAP} & \textbf{R1} & \textbf{mAP} & \textbf{R1} & \textbf{mAP} & \textbf{R1} & \textbf{mAP}\\ \hline
DDAG \cite{DDAG}  &  97.59 &79.62& 94.68& 76.61& 92.96& 73.42& 41.09& 45.85&  99.9 & 77.53 & 45.39 & 47.13 & 99.24 & 51.31 & 22.50 & 17.94  \\ 
MPANet \cite{wu2021Nuances} & \textbf{97.94} & 83.80 & 95.23 & 80.98 & 94.10 & 78.91 & 54.54 & 57.24                & \textbf{100} & 84.32 & 60.24 & 61.57 &  99.17& 59.61 & 25.08 & 18.82 \\
DEEN \cite{LLCM}    &                        95.79 & 82.18& 92.29& 80.32& 89.85 &77.32& 59.13& 61.03& 99.95 & 88.49 & 75.24 & 71.45 & 99.25 & 73.73 & 57.90 &  43.65 \\ 
SGEIL \cite{shape-Erase23} &96.76 & 80.52& 94.05 &78.74& 91.10& 74.82& 48.72& 52.96& - & - & - & - & - & - & - &  - \\ 
SAAI  \cite{SAAI}                         &97.63& 83.88& 95.22& 81.50& 93.82& 79.08& 55.08& 57.61& 100 & 88.19 & 69.22& 68.82& 99.57 &70.60& 46.12 &34.63 \\ 
IDKL  \cite{IDKL}                         & 98.25 & 84.78&  96.15&  82.51&  94.87&  79.85&  54.00&  57.51&  99.95 & 88.33 & 71.70 & 70.64 & 99.57 & 70.54 & 39.25 &  32.04 \\ 

 \hline
\NM(ours) & 97.14 & \textbf{87.29}& \textbf{96.27}& \textbf{85.67}& \textbf{94.95}& \textbf{84.79}& \textbf{70.96}& \textbf{70.76}& \textbf{100} & \textbf{92.78} & \textbf{85.39} & \textbf{81.42} & \textbf{99.80} & \textbf{74.59} & \textbf{58.87} & \textbf{45.18} \\ \hline 
\end{tabular}
}
\vspace{-0.3cm}
\caption{Accuracy of the proposed method and open-sourced state-of-the-art methods on the SYSU-MM01 (single-shot setting), RegDB, and LLCM datasets in different mixed gallery settings. Visible images are chosen as the query. }
\label{tab:mix-results-v}
\end{table*}

\subsection{Additional Ablation Studies}
\subsubsection{Computational Complexity}

We compare the training times of several state-of-the-art methods in Cross-Modal ReID to demonstrate how much additional computational burden is added due to the augmentation of these methodologies with our method. The training times have been documented in \autoref{tab:time}. Note that each method has its own variable number of training epochs, keeping in mind the optimal number of epochs suggested for these methods. We argue that the overall increase in computational time is a reasonable trade-off with a projected performance increase, thus highlighting the superiority of our method. 
The upper-bound model represents a best-case scenario with three separate SAAI models trained independently for visible, infrared, and VI images.
Also, to compare with the upper-bound, using \NM is more efficient.

\begin{table}[!htbp]
\centering
\resizebox{0.9\columnwidth}{!}{%
\begin{tabular}{|c|c|c|c|c|c|c|c|}
\hline
\textbf{Method} & \textbf{Training time} & \textbf{\begin{tabular}[c]{@{}c@{}}Training time \\ with \NM \end{tabular}} & \textbf{\# epochs}  & \textbf{\# of param}  & \textbf{\begin{tabular}[c]{@{}c@{}}\# of params \\ with \NM \end{tabular}} & \textbf{Flops} & \textbf{\begin{tabular}[c]{@{}c@{}}Flops \\ with \NM \end{tabular}}  \\ \hline
DDAG   &  111  &  137  &  80 & 40M & 54M & 0.5T & 0.6T  \\
DEEN   & 686  &  732  & 151  & 61M  & 75M & 1.3T &  1.5T   \\
SGEIL &  597  & 631    &  120  & 87M & 101M & 0.9T &  0.97T  \\
SAAI   & 78 & 101 & 160  & 72M & 85M& 0.75T & 0.8T \\
IDKL   & 182 & 204 & 180  & 88M & 106M & 0.95T & 1.1T \\
MPANet & 144  &  179   &  140  & 74.8M & 88M & 0.9T &  1.02T   \\ \hline
Upper-Bound & 184 & - & 160 & 216M & - & 2.25T & - \\ \hline
\end{tabular}%
}
\caption{Training times for state-of-the-art Cross-Modal ReID methods (on SYSY-MM01 dataset) compared to the training times of the same methods modified with our method. Time has been reported in minutes.}
\label{tab:time}
\end{table}

Table \ref{tab:SYSU_Upper} compares the performance of the baseline SAAI method \cite{SAAI}, its modified version enhanced with our proposed \NM{} framework, and an upper-bound model across mixed-modal, cross-modal, and uni-modal settings on the SYSU-MM01 dataset. The results show that the SAAI+\NM{} model consistently outperforms the baseline SAAI in mixed-modal settings, achieving notable improvements in both Rank-1 accuracy and mAP. For instance, in the Mix setting, SAAI+\NM{} achieves an mAP of 80.47\% compared to 74.59\% for the baseline. In the challenging Mix-ID setting, it improves mAP from 53.30\% to 62.45\%, demonstrating the effectiveness of \NM{} in disentangling modality-related and modality-erased features.

In cross-modal settings, SAAI+\NM{} also demonstrates robustness, achieving a slight improvement in mAP for the "All" setting (71.08\% vs. 69.71\%) while maintaining competitive performance in uni-modal scenarios. In the I$\rightarrow$I and V$\rightarrow$V settings, SAAI+\NM{} either matches or outperforms the baseline without compromising intra-modality retrieval. Notably, in some cases, SAAI+\NM{} exceeds the upper-bound, such as in the Mix setting where it achieves an mAP of 80.47\

\begin{table*}[!pht]
\centering
\resizebox{0.95\textwidth}{!}{%
\begin{tblr}{
  colspec={|l|cc|cc|cc|cc||cc|cc||cc|cc|},
  cell{1}{2} = {c=8}{c},
  cell{1}{10} = {c=4}{c},
  cell{1}{14} = {c=4}{c},
  cell{1}{1} = {r=3}{},
  cell{2}{2} = {c=2}{},
  cell{2}{4} = {c=2}{},
  cell{2}{6} = {c=2}{},
  cell{2}{8} = {c=2}{},
  cell{2}{10} = {c=2}{},
  cell{2}{12} = {c=2}{},
  cell{2}{14} = {c=2}{},
  cell{2}{16} = {c=2}{},
  colsep=3.5pt,
  stretch=0
}
\hline
\textbf{Method}  & Mixed-Modal    &                &                  &                &                     &                &                 &                & Cross-Modal  &              &                 &              & Uni-Modal                  &              &              &              \\ \hline
 & \textbf{Mix}   &                & \textbf{Mix-Cam} &                & \textbf{Mix-Cam-ID} &                & \textbf{Mix-ID} &                & \textbf{All} &              & \textbf{Indoor} &              & \textbf{I$\rightarrow$I} &              & \textbf{V$\rightarrow$V} &              \\ \hline
& \textbf{R1}    & \textbf{mAP}   & \textbf{R1}      & \textbf{mAP}   & \textbf{R1}         & \textbf{mAP}   & \textbf{R1}     & \textbf{mAP}   & \textbf{R1}  & \textbf{mAP} & \textbf{R1}     & \textbf{mAP} & \textbf{R1}                & \textbf{mAP} & \textbf{R1}  & \textbf{mAP} \\ \hline
SAAI\cite{SAAI}  & 96.01 & 74.59 & 90.63 & 72.51 & 84.30 & 65.94 & 52.49 & 53.30 & 73.87 & 69.71 & 84.19 & 82.59 & 89.29 & 93.06 & 98.24 & 93.46\\
SAAI\cite{SAAI} +\NM  & 97.27 & 80.47 & 92.66 & 76.81 & 88.51 & 73.24 & 66.23 & 62.45 & 74.25 & 71.08 & - & - & 92.11 & 94.57 & 98.60 & 94.08 \\ \hline
Upper-bound & 95.74 & 78.20 & 91.18 & 74.94 & 85.44 & 68.71 & 59.38 & 55.70 & 73.87 & 69.71 & 84.19 & 82.59 &88.06 & 92.80 & 98.85 & 95.20 \\ \hline
\end{tblr}
}
\vspace{-0.3cm}
\caption{Performance of SAAI\cite{SAAI} VI-ReID technique in mixed, cross, and uni-modal settings on the SYSU-MM01 dataset compared to upper-bound. The upper-bound is a model that contains three separate SAAI models for V,I, and VI images. }
\label{tab:SYSU_Upper}
\end{table*}

\subsubsection{The influence of hyperparameters.}
In this section, we present a bar chart (Fig. \ref{fig:lambda}) to examine the detailed influence of hyperparameters by gradually increasing their
value. As we can see, the best performance is achieved when $\lambda_1$ is set to 0.4, $\lambda_2$ is set to 0.6, and $\lambda_3$ is set to 0.4, respectively. The upward trend of the bars demonstrates the effectiveness of each loss. 
\begin{figure}[!h]
    \centering
    \vspace{-0.2cm}
    \includegraphics[width=0.99\linewidth]{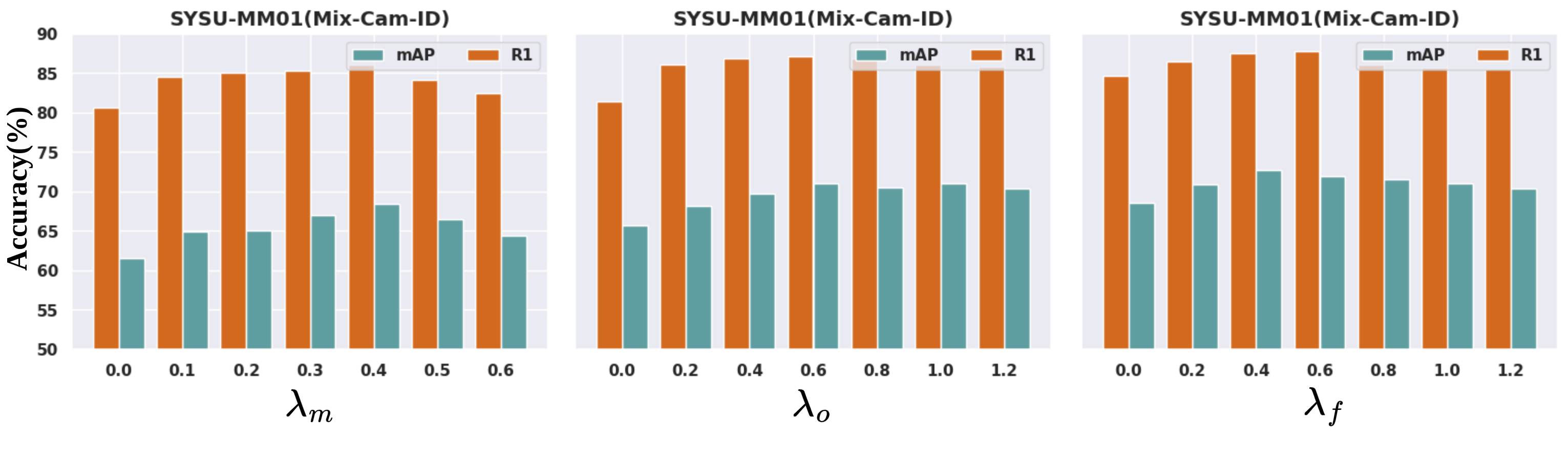}
    \vspace{-0.54cm}
    \caption{Influence of different $\lambda_m$,$\lambda_o$ and $\lambda_f$ values on SYSU-MM01 in Mix-Cam-ID evaluation.}
    \label{fig:lambda}
    \vspace{-0.35cm}
\end{figure}

\subsubsection{Choice of Backbone.}
We test the performance of our method on two main choices of backbones, ResNet \cite{he2016deep}, and ViT \cite{alexey2020image}. For vision transformer models, we resized images to 224 by 224 pixels, and we used the extracted feature from the last layer as representative features. We observe that despite its success in recent times \cite{islam2022recent}, standard ViT models struggle to perform on par with ResNet. This is likely due to the large image resolution ViTs ($224 \times 224$) are designed to train on. This size allows patches to be as large as $16 \times 16$. However, our input images were originally resized to a shape $288 \times 144$ restricts us from using a ViT architecture reliably as the ViT architecture dimensions depend on input dimensions. In this regard, ResNet is a much more flexible backbone that can work on a wide range of image resolutions and therefore delivers better performance. All models were trained following the standard implementation settings as described in Section 4.1 of the main paper, with the exception of ViT inputs being resized to $224 \times 224$, instead of $288 \times 144$.

\begin{table}[!htbp]
\centering
\resizebox{0.5\columnwidth}{!}{%
\begin{tabular}{|c|c|cc|cc|}
\hline
\multirow{2}{*}{\textbf{Backbone}} &
  \multirow{2}{*}{\textbf{\# Parameters}} &
  \multicolumn{2}{c|}{\textbf{Cross-Modal}} &
  \multicolumn{2}{c|}{\textbf{Mix-Cam-ID}} \\ \cline{3-6} 
  &
   &
  \multicolumn{1}{c|}{\textit{\textbf{R1}}} &
  \textit{\textbf{mAP}} &
   \multicolumn{1}{c|}{\textit{\textbf{R1}}} &
  \textit{\textbf{mAP}} \\ \hline
ResNet-18 & 29.02M & \multicolumn{1}{c|}{67.68}      &  63.51     & \multicolumn{1}{c|}{80.49} & 62.61 \\
 ResNet-50 & 58.15M & \multicolumn{1}{c|}{73.43} & 70.92 & \multicolumn{1}{c|}{87.56 } & 72.0 \\
 ViT-B-16  & 102.87M & \multicolumn{1}{c|}{55.47} & 54.97  & \multicolumn{1}{c|}{72.23} & 57.51 \\
 ViT-L-16  & 331.55M & \multicolumn{1}{c|}{52.57} & 53.04 & \multicolumn{1}{c|}{68.14} & 57.18 \\ \hline
 \end{tabular}%
 }
 \caption{The influence of the choice of baseline backbone on the performance of the proposed method.}
 \label{tab:backbone}
 \end{table}

\subsection{Feature Distributions Visualization}
We visualized the feature distributions of the baseline and our modules using UMAP \cite{mcinnes2018umap} in the 2D space, as shown in Fig. \ref{fig:umap}(a-d). The results indicate that the modality-erased feature brings embeddings of the same person closer across modalities compared to the baseline (see circular features in Fig. \ref{fig:umap}(a,b)). Meanwhile, the modality-related component pushes apart intra-modality features of different individuals. Together, \NM effectively leverages both components to better separate identities and reduce modality discrepancy within the mixed-modal gallery.

\begin{figure}[h!]
    \centering
    \begin{subfigure}{.23\linewidth}
      \centering
      \frame{\includegraphics[width=\linewidth, height=0.67\linewidth]{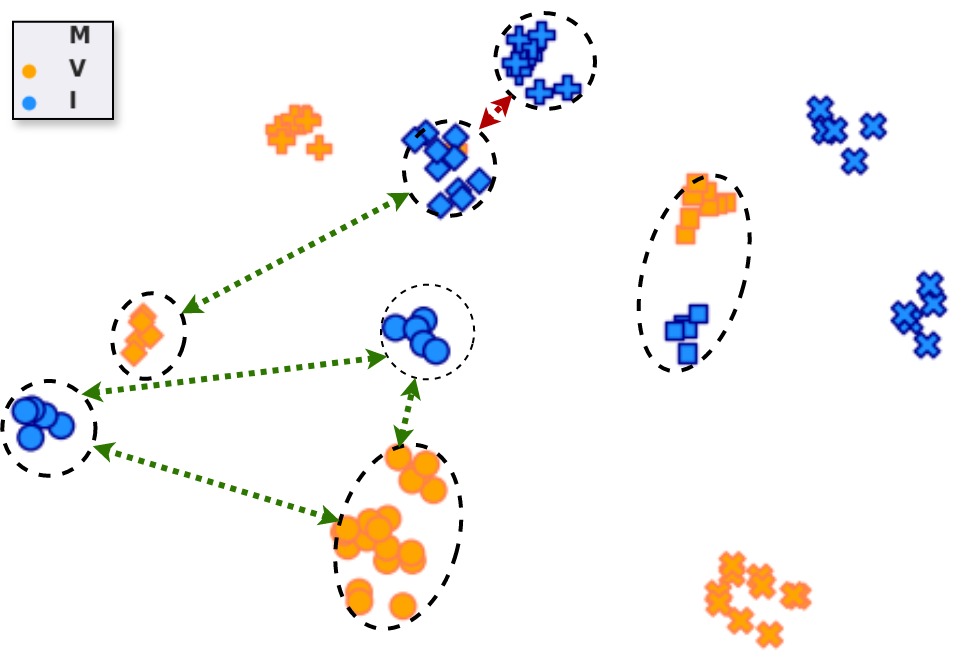}}
      \caption{Baseline}
    \end{subfigure} %
    \hspace{0.2cm}
    \begin{subfigure}{.23\linewidth}
      \centering
      \frame{\includegraphics[width=\linewidth, height=0.67\linewidth]{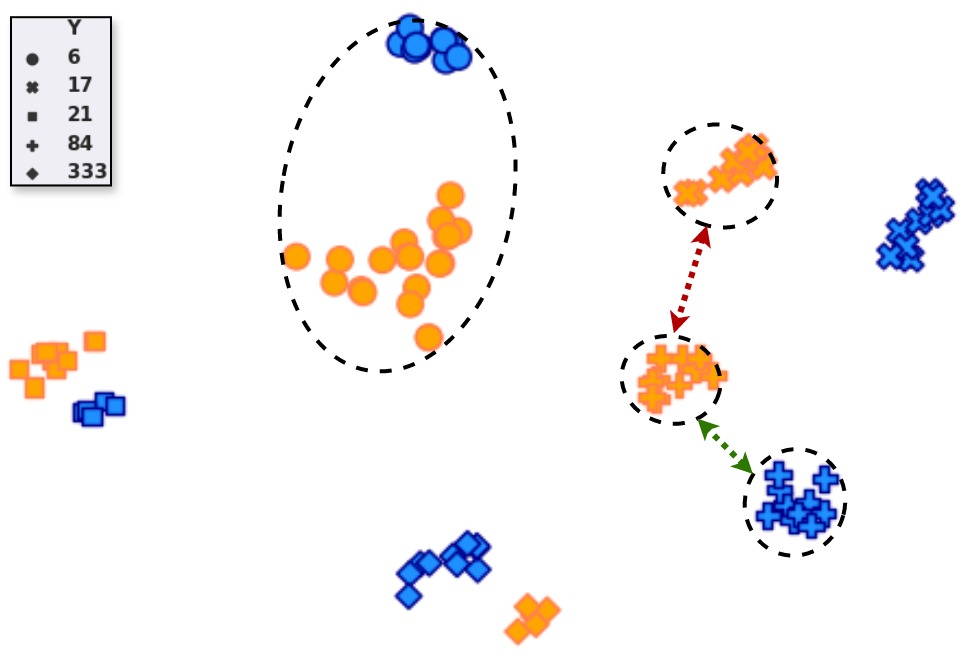}}
      \caption{Modality-Erased}
    \end{subfigure}
    \begin{subfigure}{.23\linewidth}
      \centering
      \frame{\includegraphics[width=\linewidth, height=0.67\linewidth]{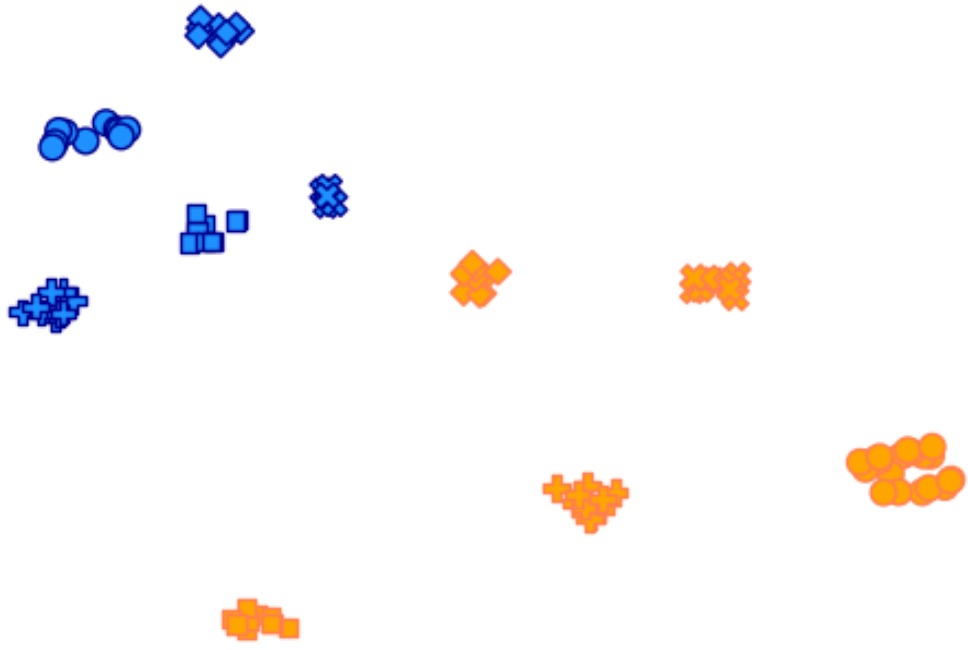}}
      \caption{Modality-Related }
    \end{subfigure}%
    \hspace{0.2cm}
    \begin{subfigure}{.23\linewidth}
      \centering
      \frame{\includegraphics[width=\linewidth, height=0.67\linewidth]{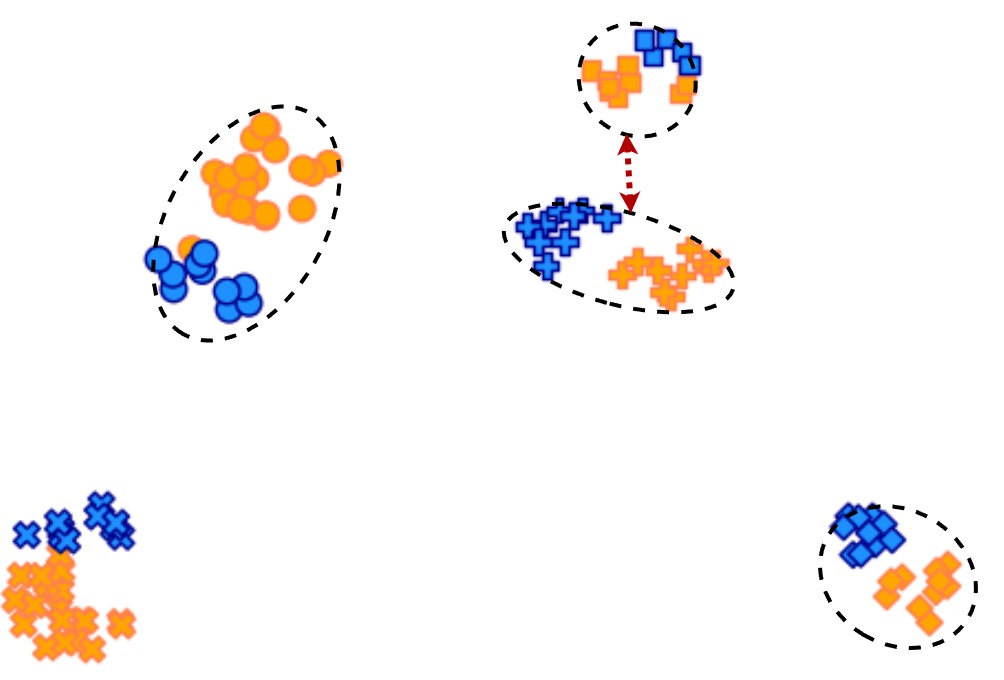}}
      \caption{\NM}
    \end{subfigure}%
    \vspace{-0.35cm}
    \caption{The distribution of feature embeddings in the 2D feature space, where orange and blue colors denote the V and I. The samples with the same shape are from the same person. The green and red arrows show the distance between the same person's features and different persons, respectively.    }
    \label{fig:umap}
\end{figure}

\clearpage
\pagebreak


\end{document}